\def\eqref#1{equation~\ref{#1}}
\def\1{\bm{1}}
\DeclareMathAlphabet{\mathsfit}{\encodingdefault}{\sfdefault}{m}{sl}
\SetMathAlphabet{\mathsfit}{bold}{\encodingdefault}{\sfdefault}{bx}{n}
\def\gC{{\mathcal{C}}}
\title{Physics-Informed Deep B-Spline Networks}
\author{\name Zhuoyuan Wang \email zhuoyuaw@andrew.cmu.edu \\
      \addr Department of Electrical and Computer Engineering \\
      Carnegie Mellon University
      \AND
      \name Raffaele Romagnoli \email romagnolir@duq.edu \\
      \addr Department of Mathematics and Computer Science\\
      Duquesne University
      \AND
      \name Saviz Mowlavi \email mowlavi@merl.com \\
      \addr Mitsubishi Electric Research Laboratories
      \AND
      \name Yorie Nakahira \email yorie@cmu.edu \\
      \addr Department of Electrical and Computer Engineering \\
      Carnegie Mellon University}
\theoremstyle{plain}
\newtheorem{theorem}{Theorem}[section]
\newtheorem{lemma}[theorem]{Lemma}
\theoremstyle{definition}
\newtheorem{definition}[theorem]{Definition}
\newtheorem{assumption}[theorem]{Assumption}
\theoremstyle{remark}
\newcommand{\ie}{i.e., }
\newcommand{\eg}{e.g., }
\newcommand{\cpt}{c} 
\newcommand{\sol}{s} 
\newcommand{\rev}[1]{#1}            
\begin{document}

\maketitle
\begin{abstract}
Physics-informed machine learning offers a promising framework for solving complex partial differential equations (PDEs) by integrating observational data with governing physical laws. However, learning PDEs with varying parameters and changing initial conditions and boundary conditions (ICBCs) with theoretical guarantees remains an open challenge. In this paper, we propose physics-informed deep B-spline networks, a novel technique that approximates a family of PDEs with different parameters and ICBCs by learning B-spline control points through neural networks. The proposed B-spline representation reduces the learning task from predicting solution values over the entire domain to learning a compact set of control points, enforces strict compliance to initial and Dirichlet boundary conditions by construction, and enables analytical computation of derivatives for incorporating PDE residual losses. While existing approximation and generalization theories are not applicable in this setting—where solutions of parametrized PDE families are represented via B-spline bases—we fill this gap by showing that B-spline networks are universal approximators for such families under mild conditions. We also derive generalization error bounds for physics-informed learning in both elliptic and parabolic PDE settings, establishing new theoretical guarantees. Finally, we demonstrate in experiments that the proposed technique has improved efficiency-accuracy tradeoffs compared to existing techniques in a dynamical system problem with discontinuous ICBCs and can handle nonhomogeneous ICBCs and non-rectangular domains. Code is available at \href{https://github.com/jacobwang925/PI-BSNet}{https://github.com/jacobwang925/PI-BSNet}.
\end{abstract}

\section{Introduction}

Recent advances in scientific machine learning have significantly accelerated progress in solving complex partial differential equations (PDEs). 
Physics-informed neural networks (PINNs) are proposed to combine information of available data and the governing physics model to learn the solutions of PDEs~\citep{raissi2019physics,han2018solving}.
However, in dynamics scenarios, both the parameters of the PDE and those defining initial and boundary conditions (ICBCs) can vary over time. Consequently, efficiently solving PDEs across a broad range of parameter values becomes essential. Learning solutions for families of parametric PDEs—especially when ICBCs are highly variable or discontinuous—remains a challenging task, requiring methods that are accurate, efficient, and theoretically sound. 
\rev{For example, in safety-critical control applications, time-varying system dynamics and shifting safe regions result in continuously changing PDEs that characterize safety probabilities~\citep{wang2025generalizable}. However, solving these PDEs in real time with limited onboard resources is often prohibitively costly. This motivates the need for a neural network trained efficiently to represent a family of PDEs that enables rapid online inference of solutions under varying parameters and ICBCs.}

Previous literature has generalized PINNs for parametric PDEs~\citep{cho2024parameterized, boudec2024learning, huang2022meta}, and operator learning methods have been developed to map input functions to PDE solutions~\citep{kovachki2023neural, li2020fourier, lu2019deeponet}. \rev{These approaches typically impose initial and boundary conditions (ICBCs) as soft constraints through penalty terms in the loss function, which often require extensive tuning and do not guarantee strict compliance~\citep{son2023enhanced,brecht2023improving}.} 
Recent works have introduced PINNs with hard ICBC constraints by explicitly constructing solution ansatzes that satisfy the constraints by design~\citep{wang2023exact,li2024physical,chen2023hard,liu2022unified,sun2024hard}. Such ansatzes are typically tailored to specific ICBCs and may lead to reduced accuracy in the interior (\eg \cite{liu2022unified,sun2024hard}, EPINN in Fig.~\ref{fig:recovery_visualization}).
\rev{Moreover, most existing methods aim to directly learn PDE solutions on the entire domain~\citep{lu2019deeponet,cuomo2022scientific}, without exploiting more compact or structured representations. In such approaches, the PDE residuals in the training loss are computed by applying automatic differentiation over the entire space–time domain, which can be computationally-heavy.}
In addition, while many approximation and generalization bounds are derived for PINNs and neural operators~\citep{kovachki2023neural, lu2019deeponet, de2022error, de2022generic, mishra2023estimates, mishra2022estimates}, theoretical results remain limited for families of PDEs represented using neural networks with structured bases.





\begin{figure*}[t]
    \centering
    \includegraphics[width=0.99\textwidth]{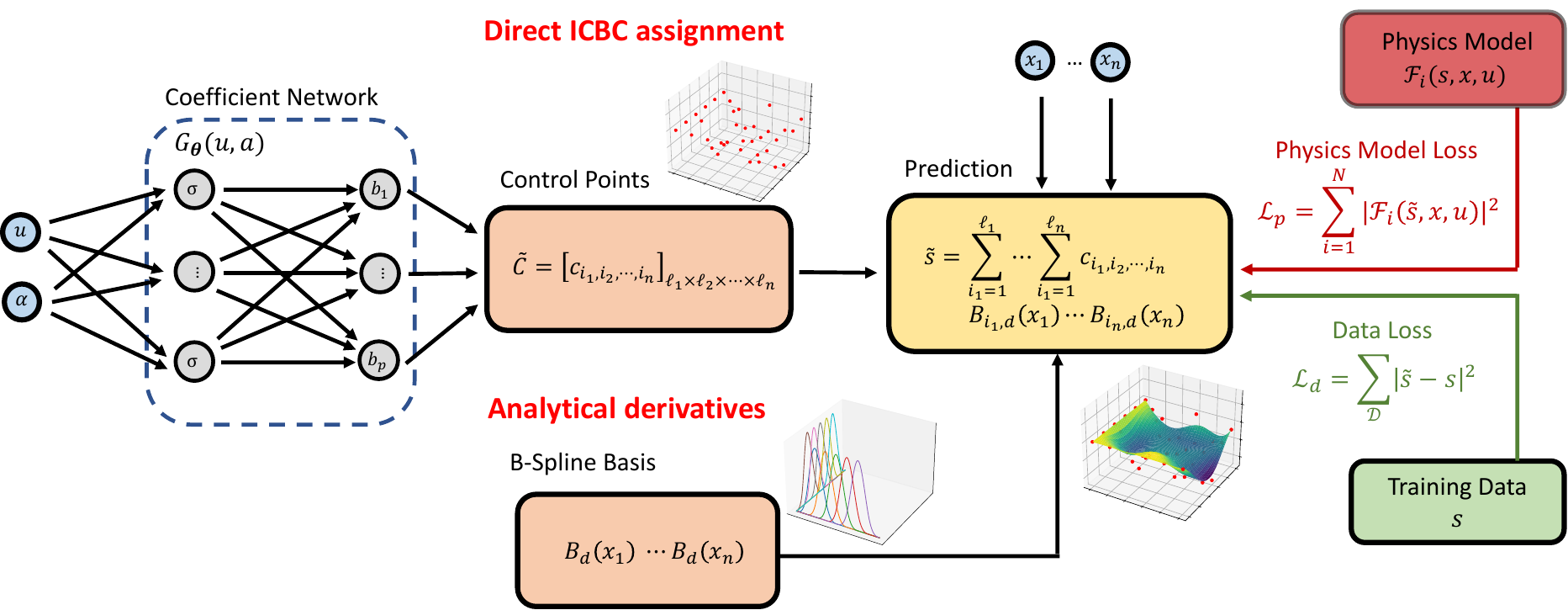}
    \caption{\rev{Diagram of PI-BSNet. The coefficient network takes system and ICBC parameters as input and outputs the control points tensor, which is then multiplied by the B-spline basis to produce the final output. Physics and data losses are used to train the coefficient network using closed-form derivative formulas, while the compliance of ICBC conditions is strictly enforced through the B-spline basis.}
    }
    \label{fig:bs_net_diagram}
\end{figure*}

\textbf{Contribution.} In this work, we propose a novel framework, termed physics-informed deep B-spline networks (PI-BSNet), that integrates neural B-splines and physics-informed learning to jointly solve families of parametric PDEs with different ICBCs (Fig.~\ref{fig:bs_net_diagram}). 
Specifically, PI-BSNet is composed of B-spline basis functions and a parameterized coefficient network that learns the weights for the B-spline basis. The coefficient network takes inputs of the PDE and ICBC parameters, and outputs a tensor of control points (\ie weights for the B-spline basis), which are then multiplied by the B-spline basis to generate the PDE values. This structured representation \textbf{enforces compliance of ICBCs by construction.}
Training is performed using both physics-based and data-based loss functions to ensure the network accurately approximates PDE solutions across the domain for any PDEs in the parametric family.
Furthermore, we show that the deep B-spline network serves as a \textbf{universal approximator for families of parametric PDEs} under mild assumptions, and we establish \textbf{generalization error bounds} for PI-BSNet.
Finally, we present experimental results showing that PI-BSNet has \textbf{improved computation vs. accuracy trade-offs} and is capable of handling nonhomogeneous ICBCs and non-rectangular domains. To the best of our knowledge, this is the first work to integrate B-spline basis representations with physics-informed learning for solving families of parametric PDEs while also providing theoretical guarantees on approximation accuracy and generalization error bounds.

\section{Related Work}
\label{sec:related_work}

\textbf{Physics-Informed Learning.} 
Physics-informed neural networks (PINNs) are neural networks that are trained to solve supervised learning tasks while respecting physics laws described by partial differential equations~\citep{raissi2019physics,han2018solving,cuomo2022scientific}. These approaches have been widely applied across domains, such as power systems~\citep{misyris2020physics}, fluid mechanics~\citep{cai2022physics}, and medical applications~\citep{sahli2020physics}, among others. 
Many variants of PINN have been developed to meet diverse learning requirements. For example, parametrized PINNs are proposed to solve parametric PDEs~\citep{cho2024parameterized, huang2022meta, de2021hyperpinn, qin2022meta,lei2024solving}. 
PINNs with hard constraints are proposed to meet specific ICBC requirements~\citep{wang2023exact, li2024physical, chen2023hard}. PINN with adaptive rescaling or coupled differentiation schemes are proposed to reduce the computation burden in training~\citep{ko2025vs, chiu2022can}.
Under certain assumptions, PINNs are shown to exhibit bounded generalization error~\citep{de2022error, de2022generic, mishra2023estimates, mishra2022estimates} and convergence to ground truth solutions~\citep{fang2021high, pang2019fpinns, jiao2021rate}. 

Physics-informed neural operators (PINOs) are neural operators~\citep{kovachki2023neural, li2020fourier, lu2019deeponet, lu2021learning, lu2022comprehensive} that learn mappings between functional spaces while respecting underlying physics laws. 
PINOs have been applied to both fixed and parametric PDEs~\citep{wang2021learning, gao2021phygeonet, li2024physics, goswami2023physics}, and extended to handle specialized problems such as pattern formation~\citep{li2023phase} and varying ICBCs~\citep{kumar2024synergistic}.
These methods typically impose ICBCs through loss terms, which may not guarantee strict compliance~\citep{brecht2023improving}. 

In comparison, our method learns B-spline control point representations rather than solution values across the entire domain as in PINN/PINO approaches (\eg\cite{raissi2019physics,han2018solving,kovachki2023neural}). Moreover, our method directly enforces ICBCs through the B-spline structure itself, rather than through loss functions (\eg\cite{cuomo2022scientific, li2024physics, goswami2023physics}) or problem-specific filters (\eg\cite{wang2023exact, li2024physical, chen2023hard}). These architectural differences yield better compliance with boundary conditions (Fig.~\ref{fig:recovery_visualization}), stable and fast training (Fig.~\ref{fig:losses}), and improved tradeoff between computational efficiency and prediction accuracy (Fig.~\ref{fig:tradeoff}).

\textbf{B-splines and neural networks.} B-splines are piece-wise polynomial functions derived from slight adjustments of Bezier curves, aimed at obtaining polynomial curves that tie together smoothly~\citep{ahlberg2016theory}.
B-splines have been integrated with finite element methods~\citep{jia2013reproducing, shen2023mesh}, employed in variational dual formulations for PDEs~\citep{sukumar2024variational}, used to parameterize PDE domains~\citep{falini2023splines}, and adapted into spline-inspired mesh movement networks for PDEs~\citep{song2022m2n}.

The combination of B-splines with neural networks has produced diverse applications, including surface reconstruction~\citep{iglesias2004functional}, nonlinear system modeling~\citep{yiu2001nonlinear, wang2022modeling}, image segmentation~\citep{cho2021differentiable}, and control system design~\citep{chen2004learning, deng2008neural}. Kolmogorov–Arnold Networks (KANs)~\citep{liu2024kan} employs spline functions to generate learnable weights as an alternative to traditional multilayer perceptrons. The neural network in our proposed PI-BSNet can take arbitrary MLP/non-MLP-based architectures, including KANs.
\rev{In the regime of physics-informed learning, convolutional neural networks (CNNs) with Hermite spline kernels are trained with PDE and boundary condition loss functions to provide forward-time prediction in fixed-parameter PDEs~\citep{wandel2022spline}. Such method requires retraining when PDE parameters or boundary conditions change.
In comparison, our work learns solutions on the entire state-time domain for parametric PDE families, 
and imposes hard compliance for varying ICBCs.}

\rev{Closest to our work are methods that learn B-spline weights for fixed‐parameter ODEs~\citep{fakhoury2022exsplinet, romagnoli2024building} and PDEs~\citep{doleglo2022deep, zhu2024best}. 
Our work extends these approaches in several key directions. First, while these works focus on learning a single PDE with fixed ICBCs, we focus on joint learning of a family of parametric PDEs with different ICBCs with new theoretical guarantees (Section~\ref{sec:theoretical_analysis}). Second, we leverage analytical derivatives of B-splines to enable efficient physics-informed learning. The use of physics-informed learning with hard ICBCs compliance enables accurate generalization beyond regions with available training data (Theorem~\ref{thm:generalization_multi} and prediction on unseen parameters in Section~\ref{sec:experiments}).}

\section{Proposed Method}
\label{sec:proposed_method}


\subsection{Problem Formulation}

The goal of this paper is to efficiently estimate high-dimensional surfaces governed by physics laws of a wide range variety of parameters (\eg the solution of a family of PDEs). We denote $s: \mathbb{R}^n \rightarrow \mathbb{R}$ as the ground truth, \ie $s(x)$ is the value of the surface at point $x$, where $x \in \mathbb{R}^n$. We assume the physics laws can be written as
\begin{equation}
\begin{aligned}
\label{eq:physics_law}
    \mathcal{F}(s, x, u) & = 0, \; x \in \Omega(\alpha), 
    \\
    \mathcal{B}(s, x, u) & = 0, \; x \in \partial \Omega(\alpha), 
\end{aligned}
\end{equation}
where $\mathcal{F}$ is the physics law and $\mathcal{B}$ is the initial and boundary conditions (ICBCs), \rev{$u \in \mathcal{U} \subset \mathbb{R}^m$ is the parameters of the systems, $\Omega(\alpha) \in \mathbb{R}^n$ parameterized by $\alpha \in \mathcal{A} \subset \mathbb{R}^k$ is the domain of interest. Here, $\mathcal{U}$ and $\mathcal{A}$ ranges for the system and domain parameters and are bounded.} In this paper, we consider $n$-dimensional bounded domain $\Omega = [a_1,b_1]\times[a_2,b_2]\times \cdots \times [a_n,b_n]$.\footnote{Such domain configuration is widely considered in the literature~\citep{takamoto2022pdebench, gupta2022towards, li2020fourier, raissi2019physics, wang2021learning, zhu2024best}. Generalizations are considered in Section~\ref{sec:trapezoid_exp} and Section~\ref{sec:general_domain}.}
Our goal is to generate $\tilde{s}$ with neural networks to estimate $s$ on the entire domain of $\Omega$, with all possible parameters $u$ and $\alpha$.
For example, in the case of solving 2D heat equations on $(x_1, x_2) \in [0, \alpha]^2$ at time $t \in [0,10]$ with varying coefficient $u \in [0,2]$ and $\alpha \in [3, 4]$, we have
\begin{align}
    \mathcal{F}(s, x, u) & = \partial s / \partial t  - u \left(\partial^2 s / \partial x_1^2+\partial^2 s / \partial x_2^2\right) = 0,
     & x = (x_1, x_2, t) \in \Omega_x \times \Omega_t, \label{eq:heat_pde}\\
    \mathcal{B}(s, x, u) & = s - 1 = 0, 
     & x = (x_1, x_2, t) \in \partial \Omega_x \times \Omega_t, \label{eq:heat_bc}
\end{align}
where $\Omega_x = [0, \alpha]^2$ and $\Omega_t = [0,10]$, and $\partial \Omega_x$ is the boundary of $\Omega_x$. Here,~\eqref{eq:heat_pde} is the heat equation and~\eqref{eq:heat_bc} is the boundary condition. 
In this case, we want to solve for $s$ on $\Omega = \Omega_x \times \Omega_t$ for all $u \in [0,2]$ and $\alpha \in [3, 4]$.
Similar problems have been studied in~\cite{li2024physics, gao2021phygeonet,cho2024parameterized} while the majority of the literature considers solving parameterized PDEs but with either fixed coefficients or fixed domain and initial/boundary conditions. We slightly generalize the problem to consider systems with varying parameters, and with potentially varying domains and initial/boundary conditions.

\subsection{B-Splines with Basis Functions}
\label{sec:bsplines_intro}

In this section, we introduce B-spline basis functions. We begin with the one-dimensional variable $x \in \mathbb{R}$. The B-spline basis functions are defined recursively by the Cox--de Boor formula~\citep{piegl2012nurbs}:
\begin{equation}
\label{eq:Cox_de_Boor}
B_{i,d}(x) = \frac{x - \hat{x}_i}{\hat{x}_{i+d} - \hat{x}_i} B_{i,d-1}(x) 
+ \frac{\hat{x}_{i+d+1} - x}{\hat{x}_{i+d+1} - \hat{x}_{i+1}} B_{i+1,d-1}(x),
\end{equation}
with the base case
\begin{equation}
\label{eq:Cox_de_Boor_interval}
B_{i,0}(x) = 
\begin{cases}
1, & \hat{x}_i \leq x < \hat{x}_{i+1}, \\
0, & \text{otherwise}.
\end{cases}
\end{equation}
Here, $B_{i,d}(x)$ denotes the value of the $i$-th B-spline basis function of order $d$ evaluated at $x$. The sequence $(\hat{x}_i)_{i=1}^{\ell+d+1}$ is a non-decreasing vector of \emph{knot points}, where $\ell$ is the number of B-spline basis functions. Since a B-spline is a piecewise polynomial function, the knot points determine the interval in which the polynomial is active. 

There are multiple ways to choose knot points. In this work, we adopt \emph{clamped knot vectors}, where the first and last knots are repeated $d+1$ times, i.e.,  
$\hat{x}_1 = \cdots = \hat{x}_{d+1}$ and $\hat{x}_{\ell+1} = \cdots = \hat{x}_{\ell+d+1}$,  
with the interior knots equally spaced. For example, on the interval $[0,3]$ with $\ell = 6$ control points and order $d = 3$, the knot vector is  
\[
\hat{x} = [0,0,0,0,1,2,3,3,3,3],
\]  
giving a total of $\ell + d + 1 = 10$ knots.

The B-spline basis functions vector is defined as
\begin{equation}
\label{eq:bs_functions_1d}
    B_d(x) := [B_{1,d}(x),  B_{2,d}(x), \dots, B_{\ell,d}(x)]^\top,
\end{equation}
and the coefficients for these basis functions, namely control points, are defined as $\cpt := [\cpt_{1}, \cpt_{2}, \dots, \cpt_{{\ell}}]$.
Then, we can approximate a solution $s(x)$ with $\hat\sol(x)= \cpt B_d(x).$
Note that with our choice of knot points, we ensure the initial and final values of $\hat{s}(x)$ coincide with the initial and final control points $c_1$ and $c_\ell$. This property will be used later to directly impose initial conditions and Dirichlet boundary conditions with PI-BSNet. A visualization of B-spline basis functions and reconstructions can be found in Fig.~\ref{fig:bspline_visualization} in the Appendix.

More generally, for a $n$-dimensional space $x = [x_1, \cdots, x_n] \in \mathbb{R}^n$, we can generate B-spline basis functions based on the Cox-de Boor recursion formula along each dimension $x_i$ with order $d_i$ for $i = 1, 2, \cdots, n$, and the $n$-dimensional control point tensor will be given by 
$C = [\cpt_{i_1, i_2, \cdots, i_n}]_{\ell_1 \times \ell_2 \times \cdots \times \ell_n}$, 
where $i_k$ and $\ell_k$ are the index and the number of control points along the $k$-th dimension. We can then approximate the $n$-dimensional surface with B-splines and control points via
\vspace{-0.3em}
\begin{equation}
\label{eq:bs_approx_n_dim}
\hat \sol(x_1, x_2, \cdots, x_n) = \sum_{i_1=1}^{\ell_1} \cdots \sum_{i_{n}=1}^{\ell_{n}} \cpt_{i_{1}, i_{2}, \cdots, i_{n}} B_{i_{1},d_1}(x_1) \cdots B_{i_{n},d_n}(x_n).
\end{equation}

\subsection{Physics-Informed B-Spline Networks}

In this section, we introduce our proposed physics-informed deep B-spline networks (PI-BSNet). The overall diagram of the network is shown in Fig.~\ref{fig:bs_net_diagram}.
The network composites a coefficient network that learns the control point tensor $C$ with system parameters $u$ and ICBC parameters $\alpha$, and the B-spline basis functions $B_{d_i}$ of order $d_i$ for $i = 1, \cdots, n$. 
We use $G_{\boldsymbol \theta}(u, \alpha)(x)$ to denote the PI-BSNet parameterized by $\boldsymbol \theta$, where $(u, \alpha)$ is the input to the coefficient net, and $x$ is the input to the B-spline basis. We use $\tilde C := G_{\boldsymbol \theta}(u, \alpha)$ to denote the control points output by the coefficient network and $\tilde s(x) := G_{\boldsymbol \theta}(u, \alpha)(x)$ to denote the PI-BSNet prediction.
During the forward pass, the control point tensor $\tilde C$ output from the coefficient net is multiplied with the B-spline basis functions $B_{d_i}$ via~\eqref{eq:bs_approx_n_dim} to get the approximation $\tilde{\sol}$.
For the backward pass, two losses are imposed to efficiently and effectively train PI-BSNet. We first impose a physics model loss 
\begin{equation}
    \mathcal{L}_p = \sum_{x \in \mathcal{P}} \frac{1}{|\mathcal{P}|} |\mathcal{F}(\tilde s, x, u)|^2,
\end{equation} 
where $\mathcal{F}$ is the governing physics model of the system as defined in~\eqref{eq:physics_law}, and $\mathcal{P}$ is the set of points sampled to evaluate the governing physics model.
When data is available, we can additionally impose a data loss 
\begin{equation}
    \mathcal{L}_d = \frac{1}{|\mathcal{D}|}\sum_{x \in \mathcal{D}} |\sol(x) - \tilde{\sol}(x)|^2,
\end{equation} 
to capture the mean square error of the approximation, where $\sol$ is the data point for the high dimensional surface, $\mathcal{D}$ is the set of points where data are available, and $\tilde \sol$ is the prediction from the PI-BSNet. Only for special types of boundary conditions that involve derivatives of the solution (\eg Neumann and Robin types), we impose the following ICBC loss
\begin{equation}
    \mathcal{L}_{b} = \sum_{x \in \mathcal{M}} \frac{1}{|\mathcal{M}|} |\mathcal{B}(\tilde s, x, u)|^2,
\end{equation}
where $\mathcal{M}$ is the set of points sampled to evaluate the ICBC residual.
The total loss is given by
\begin{equation}
\label{eq:total_loss}
    \mathcal{L} = w_p \mathcal{L}_p + w_d \mathcal{L}_d + w_b \mathcal{L}_b,
\end{equation}
where $w_p$, $w_d$ and $w_b$ are the weights for physics, data and ICBC losses, and are usually set to values close to 1.\footnote{Ablation experiments on the effects of weights for physics and data losses can be found in Appendix~\ref{sec:loss_weight_ablation}.} Detailed procedures for training PI-BSNet is shown in Alg.~\ref{alg:pidbsn} in the Appendix. 
\rev{The choice of the loss function in~\eqref{eq:total_loss} follows the standard formulation used in physics-informed neural networks~\citep{raissi2019physics, karniadakis2021physics}. Detailed discussions on the effects, failure modes, and alternative designs of such loss functions can be found in~\citep{wang2021understanding, krishnapriyan2021characterizing}.}

Note that several good properties of B-splines are leveraged in PI-BSNet.

\textbf{First, advantageous training efficiency can be obtained with the B-spline representation}. Specifically, the B-spline basis functions are fixed and can be calculated in advance, and the coefficient network is trained to learn only the fixed number of B-spline weights instead of the solution values on the entire space. This speeds up training over existing methods for certain problems. 


\textbf{Besides, any Dirichlet boundary conditions and initial conditions can be directly assigned via the control points tensor without any learning involved.} This is a natural property of the B-spline representation with clamped knot points~\citep{ahlberg2016theory}.
This feature greatly enhances the accuracy of the learned solution near the ICBC, and improves the ease of design for the loss function as weight factors are often used to impose stronger ICBC constraints in previous literature~\citep{wang2022surrogate}.


\textbf{Lastly, the derivatives of the B-spline functions can be analytically calculated.} Specifically, the $p$-th derivative of the $d$-th ordered B-spline at arbitrary point $x$ is given by~\citep{butterfield1976computation}
\small
\begin{equation}
\label{eq:b-spline_derivatives}
\begin{aligned}
    \frac{d^p}{d x^p} B_{i, d}(x) = \frac{(d-1)!}{(d-p-1)!} \sum_{k=0}^p(-1)^k\binom{p}{k} \frac{B_{i+k, d-p}(x)}{\prod_{j=0}^{p-1}\left(\hat x_{i+d-j-1}-\hat x_{i+k}\right)}.
\end{aligned}
\end{equation}
\normalsize
Given this, we can directly calculate derivatives for the back-propagation of physics model loss $\mathcal{L}_p $, which improves both computation efficiency and accuracy over numerical methods. 


\section{Theoretical Analysis}
\label{sec:theoretical_analysis}



\rev{In this section, we provide theoretical guarantees for PI-BSNet. Despite integrating a structured B-spline representation with physics-informed learning, we show that the proposed architecture remains a universal approximator for families of parametric PDEs under mild regularity assumptions. Furthermore, we establish generalization error bounds for learning solution operators associated with elliptic and parabolic PDEs. To the best of our knowledge, such generalization guarantees for parametric PDE families have not been previously studied. All theorem proofs are provided in Appendix~\ref{sec:theorem_proofs}.}

\subsection{Universal Approximation}

In this section, we show that the proposed PI-BSNet is a universal approximator for solutions of families of PDEs at arbitrary dimension.
We consider $n$ Hilbert spaces $L_2([a_i, b_i])$ for $i = 1, 2, \cdots, n$. 


\begin{assumption}
\label{asp:continuity_sol}
    The solution of the physics problem defined in~\eqref{eq:physics_law} is continuous in $\alpha$ and $u$. Specifically, let $s_1$ and $s_2$ be the solutions of the physics problem with parameters $\alpha_1, u_1$ and $\alpha_2, u_2$. For any $\epsilon > 0$, there exist $\delta_1 > 0$ and $\delta_2 > 0$ such that \rev{given $\|\alpha_1 - \alpha_2\|_2 < \delta_1$, and $\|u_1 - u_2\|_2 < \delta_2$, we have $\|s_1 - s_2\|_2 < \epsilon$.}\footnote{Under necessary domain mapping when $\alpha_1 \neq \alpha_2$.}
\end{assumption}

\begin{assumption}
\label{asp:differentiable_sol}
    The solution of the physics problem defined in~\eqref{eq:physics_law} is differentiable in $x$.
\end{assumption}

Assumption~\ref{asp:continuity_sol} is a basic assumption for a neural network to approximate solutions of families of parameterized PDEs, and is not strict as it holds for many PDE problems.\footnote{For a well-posed and stable PDE system with unique solution (\eg linear Poisson, convection-diffusion and heat equations with appropriate ICBCs), change of the system parameter $u$ or the ICBC parameter $\alpha$ usually results in slight change of the value of the solution~\citep{treves1962fundamental}.}
Assumption~\ref{asp:differentiable_sol} holds for many PDE problems~\citep{chen2018lipschitz, de2015note, barles2010holder}, and our theoretical results can be generalized to cases where the solution is not differentiable at finite number of points.

\begin{theorem}
\label{thm:dbsn_universal_approximation}
Assume Assumption~\ref{asp:continuity_sol} and~\ref{asp:differentiable_sol} hold. 
For any $n \in \mathbb{N}^+$ dimension, any $u$ and $\alpha$ in a finite parameter set, let $d_i$ be the order of B-spline basis for dimension $i = 1, 2, \cdots, n$. Then for any $d$-time differentiable function $s(x_1,x_2, \cdots,x_n)\in L_2([a_1,b_1]\times[a_2,b_2]\times \cdots \times [a_n,b_n])$ with $d \geq \max\{d_1, \cdots, d_n\}$ where the domain depends on $\alpha$ and the function depends on $u$, and any $\epsilon>0$, there exist a PI-BSNet configuration $G_{\boldsymbol{\theta}}$ with enough width and depth, and corresponding parameters $\boldsymbol{\theta}^*$ independent of $u$ and $\alpha$ such that 
\begin{equation}
\|\tilde \sol - \sol \|_2 \leq \epsilon,
\end{equation}
where $\tilde \sol = G_{\boldsymbol{\theta}^*}(u, \alpha)(x)$ is the B-spline approximation defined in~\eqref{eq:bs_approx_n_dim} with the control points tensor $G_{\boldsymbol{\theta}^*}(u, \alpha)$.
\end{theorem}

\rev{Theorem~\ref{thm:dbsn_universal_approximation} tells us that the proposed PI-BSNet is a universal approximator of arbitrary-dimensional surfaces with \textit{varying parameters and domains}.} Thus we know that when the solution of the problem defined in~\eqref{eq:physics_law} is unique, and the physics-informed loss functions $\mathcal{L}_p$ is densely imposed and attains zero~\citep{de2022error, mishra2023estimates}, we learn the solution of the PDE problem of arbitrary dimensions. \rev{The proof integrates individual universal approximation properties of neural networks~\citep{hornik1989multilayer, leshno1993multilayer} and B-splines~\citep{blu1999quantitative} for the proposed framework.}


\subsection{Generalization Error Bounds}


In this section, we provide generalization error bounds of the proposed PI-BSNet for families of elliptic or parabolic PDEs. We make the following assumption about the Lipschitzness of the coefficient network and the training scheme of PI-BSNet.

\begin{assumption}
\label{asp:nn_lipschitz}
In the PI-BSNet framework, the output of the coefficient network is Lipschitz with respect to its inputs. \rev{Specifically, given the coefficient network $G_\theta(u,\alpha)$, $\forall u_1, u_2$ such that $\|u_1 - u_2\|_2 \leq \delta_u$, and $\forall \alpha_1, \alpha_2$ such that $\|\alpha_1 - \alpha_2\|_2 \leq \delta_\alpha$, we have $\|G_\theta(u_1,\alpha_1) - G_\theta(u_1,\alpha_2)\|_2 \leq L(\delta_u + \delta_\alpha)$, for some constant $L$.}
\end{assumption}

\begin{assumption}
\label{asp:coverage_interval}
    The training of PI-BSNet is on a finite subset of $\mathcal{U}_\text{train} \in \mathcal{U}$ and $\mathcal{A}_\text{train} \in \mathcal{A}$ for $u$ and $\alpha$, respectively. The maximum interval between the samples in $\mathcal{U}_\text{train}$ and $\mathcal{A}_\text{train}$ is $\Delta u$ and $\Delta \alpha$, and $\mathcal{U}_\text{train}$, $\mathcal{A}_\text{train}$ each fully covers $\mathcal{U}$ and $\mathcal{A}$, \ie \rev{$\forall u_1 \in \mathcal{U}$ and $\alpha_1 \in \mathcal{A}$, there exists $u_2 \in \mathcal{U}_\text{train}$ and $\alpha_2 \in \mathcal{A}_\text{train}$ such that $\|u_1 - u_2\|_2 \leq \Delta u$, $\|\alpha_1 - \alpha_2\|_2 \leq \Delta \alpha$.}
\end{assumption}


Assumption~\ref{asp:nn_lipschitz} holds in practice as neural networks are usually finite compositions of Lipschitz functions, and its Lipschitz constant can be estimated efficiently~\citep{fazlyab2019efficient}. Assumption~\ref{asp:coverage_interval} can be easily achieved since one can sample PDE parameters $u$ and $\alpha$ with equispaced intervals less than $\Delta u$ and $\Delta \alpha$ in $\mathcal{U}$ and $\mathcal{A}$ for training.
We then have the following theorem to bound the generalization error for PI-BSNet on the family of elliptic or parabolic PDEs.

\begin{theorem}
\label{thm:generalization_multi}
Assume Assumption~\ref{asp:continuity_sol}, Assumption~\ref{asp:nn_lipschitz} and Assumption~\ref{asp:coverage_interval} hold. For any elliptic or parabolic PDE with varying parameters $u \in \mathcal{U}$ and $\alpha \in \mathcal{A}$ with $\mathcal{U}$ and $\mathcal{A}$ bounded, suppose that the domain of the PDE $\Omega(\alpha) \in \mathbb{R}^{n}$ is bounded, $s_{u, \alpha} \in C^0(\bar \Omega(\alpha)) \cap C^2(\Omega(\alpha))$ is the solution where $\bar\Omega$ is the closure of $\Omega$, $\mathcal{F}(s_{u, \alpha}, x) = 0, x \in \Omega(\alpha)$ defines the PDE, and $\mathcal{B}(s_{u, \alpha}, x) = 0, x \in \Omega_b(\alpha)$ is the boundary condition. Let ${G}_{\theta}$ denote a PI-BSNet parameterized by 
$\theta$ and $\tilde{\sol}_{u, \alpha} = {G}_{\theta}(u, \alpha)$ the solution predicted by PI-BSNet. \rev{Let $\mathrm{Unif}(\cdot)$ denote the uniform distribution.} If the following conditions holds:

\begin{enumerate}[leftmargin=15pt,itemsep=0in]
    \item \rev{$\mathbb{E}_{x \sim \mathrm{Unif}(\Omega_b(\alpha))} \left[ \left| \mathcal{B}(G_\theta(u,\alpha), x) \right| \right] < \delta_1$, for all $u \in \mathcal{U}_\text{train}$ and $\alpha \in \mathcal{A}_\text{train}$.}

    \item 
    
    \rev{$\mathbb{E}_{x \sim \mathrm{Unif}(\Omega(\alpha))} \left[ \left| \mathcal{F}(G_{\theta}(u,\alpha), x) \right| \right] < \delta_2$, for all $u \in \mathcal{U}_{\text{train}}$ and $\alpha \in \mathcal{A}_{\text{train}}$.}

    \item 
    $G_{\theta}(u, \alpha)$, $\mathcal{F}({G}_{\theta}(u, \alpha), \cdot)$, $s(u, \alpha)$ are $\frac{l}{2}$ Lipschitz continuous on $\Omega(\alpha)$, for all $u \in \mathcal{U}$ and $\alpha \in \mathcal{A}$.
\end{enumerate} 
Then for any $u \in \mathcal{U}$ and $\alpha \in \mathcal{A}$, the prediction error of $\tilde{\sol}_{u, \alpha}$ over $\Omega(\alpha)$ is bounded by
\begin{equation}
\label{eq:pde_constraint_bound_multi}
    \sup _{x \in \Omega(\alpha)}\left|\tilde\sol_{u, \alpha}(x)-\sol_{u, \alpha}(x)\right| \leq \tilde \delta_1 + M \tilde \delta_2 + \tilde L (\Delta u + \Delta \alpha),
\end{equation}
where $M$ is a constant depending on parameter sets $\mathcal{A}$, $\mathcal{U}$, domain functions $\Omega$, $\Omega_b$, and the PDE $\mathcal{F}$, $\tilde L$ is some Lipschitz constant, and
\begin{equation}
\label{eq:tilde_delta_multi}
\begin{aligned}
    \tilde \delta_1 & = \max_{\alpha} \left\{\frac{2 \delta_1 |\Omega_b(\alpha)| }{R_{\Omega_b(\alpha)}|\Omega_b(\alpha)|}, 2 l \cdot\left(\frac{\delta_1 |\Omega_b(\alpha)| \cdot \Gamma(\frac{n+1}{2})}{l R_{\Omega_b(\alpha)} \cdot \pi^{ (n-1) / 2}}\right)^{\frac{1}{n}}\right\}, \\
    \tilde \delta_2 & = \max_{\alpha} \left\{\frac{2\delta_2 |\Omega(\alpha)|}{R_{\Omega(\alpha)}|\Omega(\alpha)|}, 2 l \cdot\left(\frac{\delta_2 |\Omega(\alpha)| \cdot \Gamma(n/ 2+1)}{l R_{\Omega(\alpha)} \cdot \pi^{n / 2}}\right)^{\frac{1}{n+1}}\right\},
\end{aligned}
\end{equation}
where $R_{(\cdot)}$ is the regularity of $(\cdot)$, $|(\cdot)|$ denotes the Lebesgue measure and $\Gamma$ is the Gamma function.
\end{theorem}

\rev{Theorem~\ref{thm:generalization_multi} shows that PI-BSNet can generalize reliably across families of elliptic and parabolic PDEs, rather than only matching the training instances. It establishes that small physics and boundary residuals on a well-covered set of training parameters are sufficient to guarantee uniformly small prediction errors for \textit{unseen parameter values}. The result makes explicit how generalization depends on the training coverage of the parameter space and on the smoothness of the learned mapping, thereby offering a principled explanation for generalization across varying PDE parameters and boundary conditions without retraining. Moreover, the proof technique is not specific to the PI-BSNet architecture and can potentially be extended to other network architectures, for which comparable generalization error bounds have not yet been systematically studied.}


\begin{figure}
    \centering
    \includegraphics[width=0.99\textwidth]{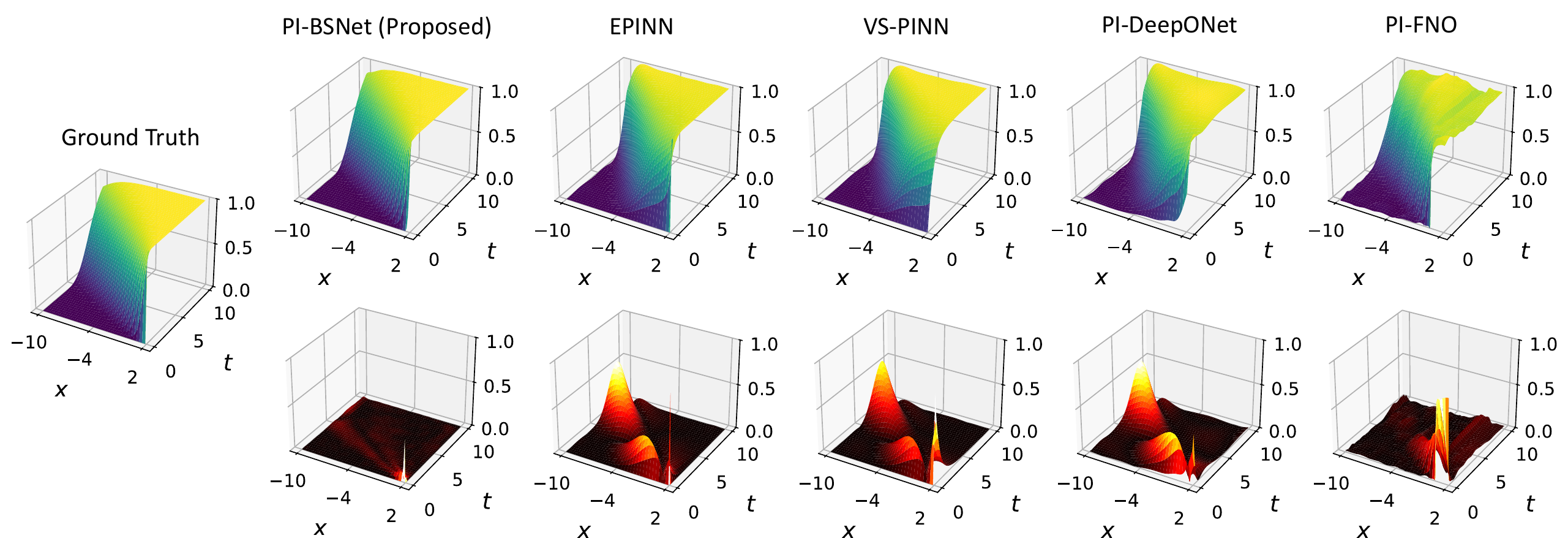}
    \caption{Recovery probability visualizations. Predictions in first row and errors in second row.
    }
    \label{fig:recovery_visualization}
\end{figure}

\section{Experiments}
\label{sec:experiments}

In this section, we present simulation results on estimating the recovery probability of a dynamical system which gives irregular ICBCs, and compare the proposed PI-BSNet with several baseline methods to show advantages. We then present results to show that PI-BSNet can handle nonhomogeneous ICBCs and learn PDEs on non-rectangular trapezoid domains. All experiment details, ablation experiments and additional experiments can be found in Appendix~\ref{sec:experiment_details}, ~\ref{sec:ablation_experiment} and~\ref{sec:additional_experiments}, respectively.

\subsection{Recovery Probabilities}
\label{sec:exp_recovery_prob}
We consider an autonomous system with dynamics $dx_t = u \: dt + dw_t$, 
where $x \in \mathbb{R}$ is the state, $w_t \in \mathbb{R}$ is the standard Wiener process with $w_0 = 0$, and $u \in \mathbb{R}$ is the system parameter. Given a set $\mathcal{C}_\alpha =\left\{x \in \mathbb{R}: x \geq \alpha\right\}$, we want to estimate the probability of reaching $\mathcal{C}_\alpha$ at least once within time horizon $t$ starting at some $x_0$.
Here, $\alpha$ is the varying parameter of the set $\mathcal{C}_\alpha$.
Mathematically this can be written as
\begin{equation}
    \sol(x_0, t) := \mathbb{P}\left( \exists \tau \in [0, t], \text{ s.t. } x_\tau \in \gC_\alpha \mid x_0 \right).
\end{equation}
From~\cite{chern2021safe} we know that such probability is the solution of convection-diffusion equations with certain initial and boundary conditions
\begin{align}
    \textbf{PDE:} \quad &\frac{\partial s}{\partial t}(x, t) 
    - u \frac{\partial s}{\partial x}(x, t) - \frac{1}{2} \left( \frac{\partial^{2} s}{\partial x^{2}}(x, t) \right) = 0, \;  \forall [x, t] \in \mathcal{C}_\alpha^c \times \mathcal{T} \label{eq:cde}\\
    \textbf{ICBC:} \quad & s(\alpha, t) = 1, \forall t \in \mathcal{T}, \quad s(x, 0) = 0, \forall x \in \mathcal{C}_\alpha^c, \label{eq:cde_icbc}
\end{align}
where $\mathcal{C}_\alpha^c$ is the complement of $\mathcal{C}_\alpha$, and $\mathcal{T} = [0, T]$ with $T=10$ be the time horizon of interest.
Note that the initial condition and boundary condition at $(x, t) = (\alpha, 0)$ is not continuous,\footnote{When on the boundary of the $\mathcal{C}_\alpha$, the recovery probability at horizon $t = 0$ is $s(\alpha, 0)=1$, but close to the boundary with very small $t$ the recovery probability is $s(x, 0)=0$.} which imposes difficulty for learning the solutions. 

\begin{figure}
    \centering
    \includegraphics[width=0.99\textwidth]{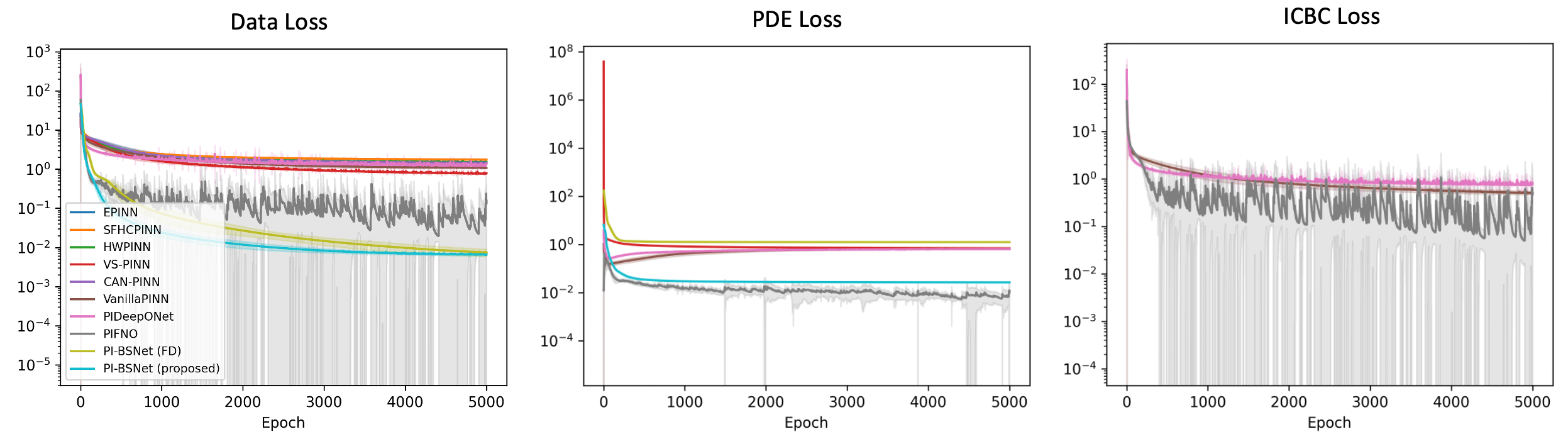}
    \caption{Losses vs. epochs with mean and standard deviation over 10 independent runs.
    }
    \label{fig:losses}
    \vspace{-0.8em}
\end{figure}

We train PI-BSNet with 3-layer fully connected neural networks with ReLU activation on varying parameters $u \in [0, 2]$ and $\alpha \in [0, 4]$ with both data and physics losses,\footnote{With data, the learning process goes to unique solutions despite the presence of only one BC.} and test on randomly selected parameters in the same domain. 
We compare PI-BSNet with the standard physics-informed neural network (PINN)~\citep{raissi2019physics}, PINNs that enforces hard constraints for ICBCs including EPINN~\citep{wang2023exact},
SFHCPINN~\citep{li2024physical}, HWPINN~\citep{chen2023hard}, efficient PINN methods including VS-PINN~\citep{ko2025vs},
CAN-PINN~\citep{chiu2022can}, physics-informed neural operator methods including PI-DeepONet~\citep{goswami2023physics}, PI-FNO~\citep{li2024physics}, with similar or comparable NN configurations. All methods are implemented to take PDE and ICBC parameters as additional inputs for parametric PDEs. All comparison experiments are run on a Linux machine with Intel i7 CPU and Nvidia GeForce RTX 4090 GPU. Details of the experiment configuration can be found in Appendix~\ref{sec:experiment_details}. 

\begin{wrapfigure}{r}{0.4\textwidth}
    \vspace{-3em} 
    \centering
    \includegraphics[width=0.4\textwidth]
    {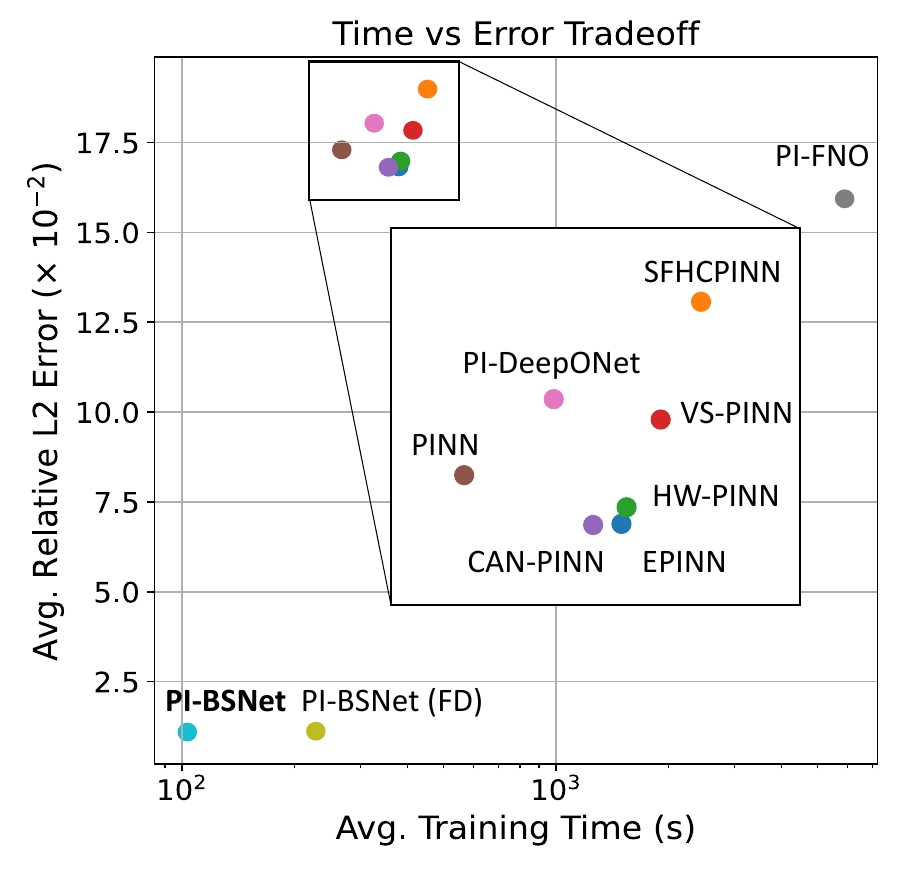}
    \vspace{-2em}
    \caption{\rev{Training time vs. prediction error trade-offs.}}
    \vspace{-2em}
    \label{fig:tradeoff}
\end{wrapfigure}


Fig.~\ref{fig:recovery_visualization} visualizes the prediction results on a parameter set $(u,\alpha)$ from the same distribution but not used for training (see Fig.~\ref{fig:recovery_visualization_full} in Appendix for full visualization results). We can see that PI-BSNet predicts the ground truth solution accurately during testing while the other methods fails to do so.
Fig.~\ref{fig:losses} visualizes the losses vs. epochs, and we can see that the loss for PI-BSNet drops the fastest and reaches convergence in the shortest amount of time. Fig.~\ref{fig:tradeoff} visualizes the averaged computation time vs. prediction accuracy over 10 independent runs, and we can see that PI-BSNet obtained the lowest prediction error as well as the lowest training time.
This is because PI-BSNet has more compact representation with B-spline basis functions, achieves zero initial and boundary condition losses at the very beginning of the training. In addition, thanks to the analytical calculation of gradients and Hessians, the training time of PI-BSNet is further shortened compared to using finite difference, which is given by PI-BSNet (FD). \rev{Additional ablations on the effect of direct ICBC assignment can be found in Appendix~\ref{app:icbc_assignment}.}


\subsection{Advection Equations with Nonhomogeneous ICBCs}
\label{sec:advection}

In this section, we demonstrate the capability of the proposed PI-BSNet to handle nonhomogeneous ICBCs.
We consider the following advection equation
\begin{equation}
\label{eq:advection}
\frac{\partial s}{\partial t} + u \frac{\partial s}{\partial x}= 0,  
\end{equation}
where $u \in [0.5, 1.5]$ is a changing parameter. The domain of interest is set to be $(x, t) \in [0, 1] \times [0, 2]$, and the initial condition is given by
\begin{equation}
    s(x, 0) = A \sin \left(k x+\alpha\right),
\end{equation}
where $A = 1$, $k = 2\pi$, and $\alpha \in [0, 2\pi)$ is a changing parameter.
We train PI-BSNet with 3-layer fully connected neural networks with ReLU activation on varying parameters $u\in [0.5, 1.5]$ and $\alpha \in [0, 2\pi)$, and test on randomly selected parameters in the same domain. The B-spline basis of order $5$ is used and the number of control points along $x$ and $t$ are set to be $\ell_x = \ell_t = 150$. Note that more control points are used in this case study to represent the high frequency solution. Fig.~\ref{fig:advection} visualizes the prediction results. 
\rev{The average relative $L_2$ error across 30 test cases is $1.444\pm 1.352 \times 10^{-1}$.}
\rev{Additional visualizations for the parametric results are shown in Fig.~\ref{fig:advection_full} in the Appendix.}

\begin{figure}
    \centering
    \includegraphics[width=0.9\textwidth]{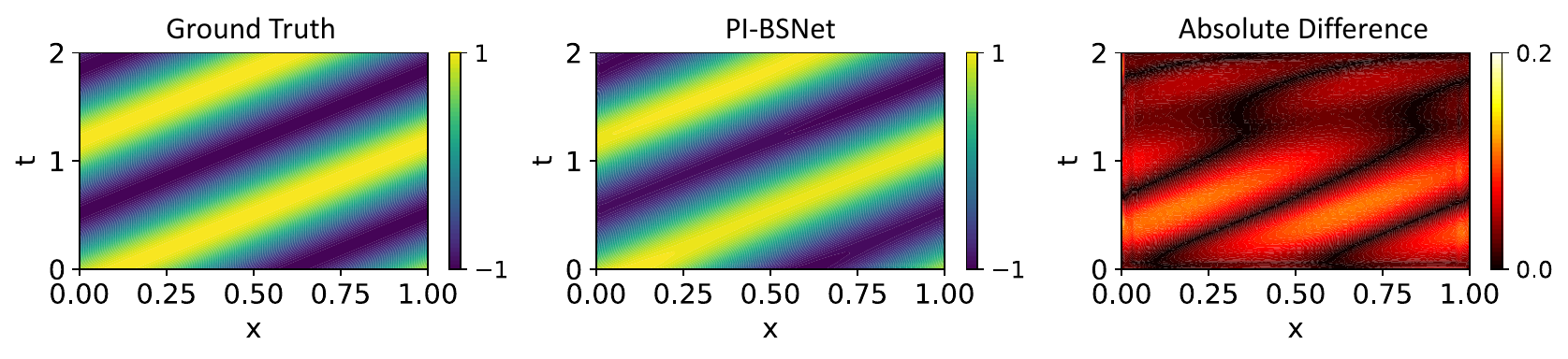}
    \vspace{-0.7em}
    \caption{Test results on advection equations with unseen parameter.
    }
    \label{fig:advection}
\end{figure}

\subsection{Diffusion on Trapezoid}
\label{sec:trapezoid_exp}

In this experiment, we demonstrate the capability of the proposed PI-BSNet to handle non-rectangular domains. Specifically, we aim to estimate the probability that a driftless Brownian motion in 2D ($x$-$y$ plane) with varying diffusion factor $\alpha \in [0, 1.5]$ along $y$ direction, starting at a point in a trapezoid
\begin{equation}
    \Omega_{\text{target}} = \{ (x,y) \in \mathbb{R}^2: \, y\in[0,1],\, x\in[-1+0.5\,y,\,1-0.5\,y] \},
\end{equation}
will exit the domain within a given time horizon \(t \in [0,T]\) with $T = 1$. Equivalently, we want to compute the following value for all starting positions \( (x,y) \in \Omega_{\text{target}} \) and $t \in [0, T]$
\begin{equation}
\label{eq:trapezoid_prob}
s(x,y,t) = \mathbb{P}\Bigl(\exists\,\tau\in[0,t] \mbox{ s.t. } x_\tau \notin \Omega \,\bigl|\, (x_0, y_0)=(x,y)\in \Omega_{\text{target}}\Bigr).
\end{equation}
We know that the exit probability $s(x,y,t)$ is the solution of the following diffusion equation
\begin{equation}
\label{eq:trapezoid_pde}
\begin{aligned}
        \frac{\partial s}{\partial t} & = \frac{1}{2}(\frac{\partial^2 s}{\partial x^2} + \alpha \frac{\partial^2 s}{\partial y^2}), 
\end{aligned}
\end{equation}
with ICBCs
\begin{equation}
\begin{aligned}
    s(0, x, y) & = 0, \quad \forall (x, y) \in \Omega_{\text{target}}, \\
    s(t, x, y) & = 1, \quad \forall t \in [0, T], \; \forall (x, y) \in \partial \Omega_{\text{target}}.
\end{aligned}
\end{equation}
To solve this problem, we transform the target domain $\Omega_{\text{target}}$ to a rectangular mapped domain $ \Omega_{\text{mapped}} = \{ (u,v) \in \mathbb{R}^2: \, u\in[0,1],\, v\in[0,1] \}$, and find the corresponding PDE. We train a PI-BSNet with order $d=3$, number of control points $\ell_x = \ell_y = 20$ and $\ell_t = 100$ on 10 uniformly sampled $\alpha \in [0, 1.5]$. We then test the prediction results on unseen $\alpha$. Fig.~\ref{fig:trapezoid} visualizes the results for one test case. It can be seen that PI-BSNet can accurately predict the diffusion evolution on the trapezoid with unseen parameter. 
\rev{The average prediction relative $L_2$ error is $3.172 \pm 1.580 \times 10^{-3}$, over 10 random testing trials.}
Domain transformation derivations and experiment details can be found in Appendix~\ref{sec:trapezoid_appendix}. 
\rev{Additional visualizations for the parametric results are shown in Fig.~\ref{fig:trapezoid_full} in the Appendix.}

\begin{figure}
    \centering    \includegraphics[width=0.99\textwidth]{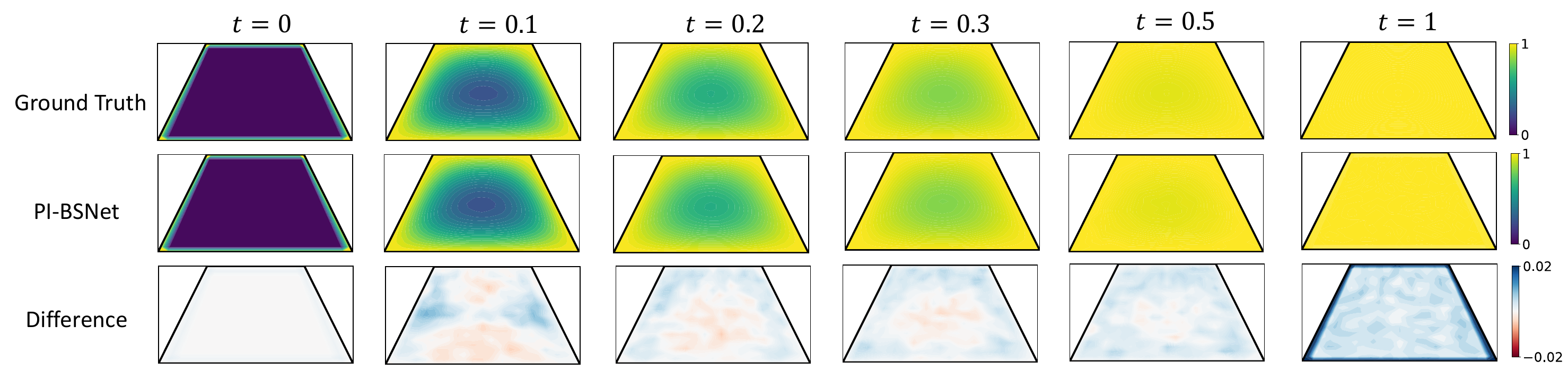}
    \caption{Results on diffusion equation on the trapezoid over time.
    }
    \vspace{-0.5em}
    \label{fig:trapezoid}
\end{figure}

\section{Discussions}
In this section, we discuss the practical considerations and limitations.

\textbf{Choice of control point numbers.} Theorem~\ref{thm:dbsn_universal_approximation} suggests that a higher number of control points can provide a lower approximation error. In practice, when data are available, which is common for parametric PDE learning problems, one could tune the number of control points in prior to training to reach desired approximation error, or choose higher numbers of control points to ensure expressiveness of the model. In general, smoother problems will require less control points to reach a certain desired error tolerance, since B-splines are smooth functions. \rev{The number of control points has a direct impact on the training efficiency of PI-BSNet. Specifically, for a $n$-dimensional problem, the number of neurons in the last layer of the coefficient network scales $O(\ell^n)$ with $\ell$ being the number of control points along each dimension. Empirically, the number of NN parameters and the training time both increase as the number of control points increases, as shown in Table~\ref{tab:bs_num_control_pts} in Appendix~\ref{sec:num_control_point_ablation}, where we ablate different number of control points. In contrast, the test-time forward pass of the model remains computationally inexpensive and exhibits negligible dependence on the number of control points. In our implementation, the forward pass time is on the order of $10^{-4}$ seconds and is largely dominated by framework-level overhead (e.g., tensor allocation and dispatch in PyTorch), rather than the spline evaluation or neural network computation itself.}

\textbf{Scalability to high-dimensional systems.} 
For the number of points used to enforce the PDE loss, the proposed PI-BSNet, along with many PINN and neural operator methods, suffers from the curse of dimensionality. However, adaptive sampling techniques, such as those discussed in~\citep{zeng2022adaptive}, can be effective in reducing the number of points required for training PINNs, especially when the PDE solution is concentrated in certain regions of the domain. Similar ideas can be potentially incorporated into the proposed PI-BSNet framework through adaptive knot placement~\citep{yeh2020fast}, which would reduce the number of control points needed and improve scalability. 
 
For the number of network parameters, using fixed B-spline basis will result in exponential increase with the dimension for the proposed PI-BSNet. 
Additional treatment such as hierarchical B-splines~\citep{valentin2019b}, control point adjustments~\citep{yeh2020fast, yang2004control} and non-uniform B-spline representations~\citep{piegl2012nurbs} can be potentially used to reduce the number of control points, thus effectively reduce the network size. Nevertheless, the proposed PI-BSNet requires less control points than typical grid-based optimization methods for PDE solving in such cases, thanks to the representation capacity of B-spline basis functions.


\textbf{General domains and transformations.}
While the standard B-spline formulation primarily supports domains that are diffeomorphic to rectangulars, we show in Appendix~\ref{sec:general_domain} that B-splines can effectively represent solutions on general domains, which motivates future work on physics-informed learning for such cases.
In addition, coordinate transformations such as~\cite{mojgani2023kolmogorov} can be potentially leveraged to effectively reduce the representation complexity, for advection-dominated equations such as~\eqref{eq:advection}, where dense control points are typically needed to accurately capture the coupled evolution between state and time. 

\textbf{Fixed basis vs. learned basis. } With the use of fixed B-spline basis, the proposed method enjoys unique features such as exact enforcement of initial and boundary conditions, analytical derivative calculations for physics loss functions, inherent solution smoothness, and a compact, structured representation. These lead to practical advantages for systems like the one studied in Section~\ref{sec:exp_recovery_prob}, where we demonstrate faster convergence, higher prediction accuracy, and reduced computation time.
In the literature, there are also methods such as PI-DeepONet~\citep{goswami2023physics} that leverage learned basis. Correspondingly, longer training time is usually required due to the lack of structural advantages, with a potential gain in expressiveness. 
This distinction reflects the difference of architecture choice results in different capabilities and target applications. In particular, our method is especially suited for scenarios that demand strict IC/BC satisfaction, smooth solutions, and efficient training.


\section{Conclusion}
\label{sec:conclusion}
In this paper, we propose physics-informed deep B-spline networks (PI-BSNet), which incorporate B-spline functions into physics-informed neural networks, to efficiently learn solutions of families of PDEs with varying ICBCs. With PI-BSNet, analytical derivatives are available for B-splines to calculate physics-informed losses, initial conditions and Dirichlet boundary conditions can be directly imposed through B-spline control points. 
We prove theoretical guarantees that PI-BSNets are universal approximators and have bounded generalization errors for elliptic and parabolic PDE families. We demonstrate in experiments that PI-BSNet achieves better training time and prediction accuracy trade-offs over various baselines, and is capable of addressing nonhomogeneous ICBCs and non-rectangular domains. Future work includes extensions to unparameterized domains and high-dimensional problems.
\rev{Besides, the structured and smooth spline representation underlying PI-BSNet provides a natural pathway to further extend the proposed framework, as fundamental B-spline properties such as convex hull guarantees, inherent smoothness, and noise-filtering behavior can support uncertainty quantification, robustness analysis, and other promising future directions toward reliable and interpretable physics-informed learning \citep{prautzsch2002bezier,de1978practical,wahba1990spline}.}



\section*{Acknowledgments}

This material is based upon work supported in part by the National Science Foundation under Grant number 2442948, and in part by a grant from the Commonwealth of Pennsylvania, Department of Community and Economic Development.
Saviz Mowlavi is supported solely by Mitsubishi Electric Research Laboratories.
We thank Jasmine Ratchford for insightful discussions that helped shape this work. 
We are also grateful to Giovanni Leoni from the Department of Mathematics at Carnegie Mellon University for valuable discussions on the continuity of PDE solutions.

\bibliographystyle{tmlr}
\bibliography{citation}


\newpage

\appendix



\renewcommand{\contentsname}{Table of Contents}
\tableofcontents

\addcontentsline{toc}{section}{Appendix}

\section{Proof of Theorems}
\label{sec:theorem_proofs}

\subsection{Universal Approximation (Theorem~\ref{thm:dbsn_universal_approximation})}

In this section, we prove that PI-BSNets are universal approximators of families of PDEs of arbitrary dimensions.

We first consider the one-dimensional function space $L_2([a,b])$ with $L_2$ norm defined over the interval $[a,b]$.
For two functions $s, g \in L_2([a,b])$,
we define the inner product of these two functions as
\begin{equation}
\langle s,g \rangle := \int_{a}^b s(x)g^*(x)dx,
\end{equation} 
where $*$ denotes the conjugate complex.
We say a function $s(x)$ is square-integrable if the following holds
\begin{equation}
\langle s,s \rangle=\int_{a}^b \vert s(x)\vert^2 dx<\infty.
\end{equation}   
We define the $L_2$ norm between two functions $s, g$ as
\begin{equation}
\label{eq:ls_norm}
    \Vert s-g \Vert_{2} := \left(\int_a^b |s(x) - g(x)|^2 dx\right)^{\frac{1}{2}}.
\end{equation}
We then state the following theorem that shows B-spline functions are universal approximators in the sense of $L_2$ norms in one dimension. 
\begin{theorem}
\label{thm:b_spline_error_bound}
Given a positive natural number $d$ and any $d$-time differentiable function $s(x)\in L_2([a,b])$, then for any $\epsilon>0$, there exist a positive natural value $\bar \ell$ such that for all $\ell \geq \bar \ell$, there exists a realization of control points $c_1, c_2, \cdots, c_\ell$ such that 
\begin{equation}
\Vert s-\hat s \Vert_{2}\leq \epsilon,
\end{equation} 
where 
\[
\hat s(x)=\sum_{i=1}^\ell c_i B_{i,d}(x)
\]
is the B-spline approximation with $B_{i,d}(x)$ being the B-spline basis functions defined in~\eqref{eq:bs_functions_1d}.
\end{theorem}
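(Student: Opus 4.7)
The plan is to invoke classical B-spline approximation theory, in particular a Jackson-type error estimate. The crucial structural fact I would use is that for the clamped knot sequence specified in the theorem (multiplicity $d+1$ at both endpoints with equispaced interior knots), the span $S_\ell := \mathrm{span}\{B_{i,d}\}_{i=1}^\ell$ is a subspace of piecewise polynomials of degree $d$ on $[a,b]$, and it contains the restriction of every polynomial of degree at most $d$. This polynomial-reproduction property follows from the Marsden identity together with the partition-of-unity property of the B-spline basis, and it is precisely what enables approximation of smooth functions to order $h^d$.

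The first step is to exhibit a bounded linear operator $Q_h : L_2([a,b]) \to S_\ell$ (for instance a de Boor--Fix-type quasi-interpolant, or the $L_2$ orthogonal projection onto $S_\ell$) that reproduces polynomials of degree at most $d$ and is uniformly bounded in operator norm as $\ell$ grows. Combined with polynomial reproduction, a localized Bramble--Hilbert argument applied on each subinterval $[\hat{x}_i, \hat{x}_{i+1}]$ then yields the Jackson-type bound
\[
\Vert s - Q_h s \Vert_2 \leq C\, h^d\, \Vert s^{(d)} \Vert_2,
\]
where $h = (b-a)/(\ell-d)$ is the interior mesh size and $C$ depends only on $d$ and $b-a$.

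The second step is to conclude the theorem: given $\epsilon > 0$, pick $\ell$ large enough that $C h^d \Vert s^{(d)} \Vert_2 < \epsilon$. This is possible because $h \to 0$ as $\ell \to \infty$, and $\Vert s^{(d)} \Vert_2$ is finite since $s$ is $d$-times differentiable on the bounded interval $[a,b]$. I would then define the control points $c_1, \ldots, c_\ell$ to be the B-spline coefficients of $Q_h s$, so that $\hat s = Q_h s$ and the desired bound $\Vert s - \hat s \Vert_2 \leq \epsilon$ follows immediately from the Jackson estimate above.

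The main obstacle I expect is handling the clamped endpoints cleanly. Because the first and last $d+1$ knots coincide, the quasi-interpolant must be defined using only values of $s$ on $[a,b]$ rather than on an extended real line, and the standard local polynomial-reproduction arguments have to be adapted near the boundary where several subintervals collapse to a point. I would either appeal to a standard construction for clamped knot vectors from the spline-approximation literature (such as a Schumaker-type quasi-interpolant), or else bypass the issue entirely by taking $Q_h$ to be the $L_2$ orthogonal projection onto $S_\ell$: orthogonal projection is trivially bounded, and a Jackson bound on it can be obtained indirectly by comparing it to any polynomial-reproducing quasi-interpolant via the triangle inequality together with the best-approximation optimality of the projection.
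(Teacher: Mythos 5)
Your proposal is correct and shares the paper's overall skeleton --- establish a Jackson-type estimate of the form $\Vert s-\hat s\Vert_2 \leq C\,h^d\,\Vert s^{(d)}\Vert$ with $h\sim (b-a)/\ell$, then let $\ell\to\infty$ --- but it reaches that estimate by a genuinely different mechanism. The paper obtains the bound by citing the Strang--Fix / Blu--Unser theory of least-squares spline approximation in shift-invariant spaces: the optimal control points are realized as samples of a pre-filtered version of $s$, and the constant $C_d$ and the seminorm $\Vert s^{(d)}\Vert$ are defined in the Fourier domain. You instead use the finite-element-style route: polynomial reproduction of degree $d$ in the clamped spline space (via the Marsden identity), a bounded polynomial-reproducing quasi-interpolant, and a localized Bramble--Hilbert argument on each knot interval. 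Each approach buys something. The paper's citation-based argument is shorter and directly identifies the optimal ($L_2$-projection) coefficients, which it reuses later (e.g., in the proof of Lemma 4.2 and in the control-point optimality experiments); however, the cited results are formulated for cardinal splines on the whole real line, and the paper silently applies them to a bounded interval with a clamped knot vector. Your route is more self-contained on $[a,b]$ and you explicitly confront the boundary adaptation near the coincident end knots --- the one place where the paper's proof is actually glossing over a gap --- and your fallback of comparing the $L_2$ projection to any polynomial-reproducing quasi-interpolant via best-approximation optimality cleanly recovers the same conclusion. One minor discrepancy: your mesh size $h=(b-a)/(\ell-d)$ differs from the paper's $T=(b-a)/(\ell-1)$, but both tend to zero as $\ell\to\infty$, so the conclusion is unaffected.
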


\begin{proof}
(Theorem~\ref{thm:b_spline_error_bound})
See Section~\ref{sec:proof_bspline_1d}.
\end{proof}

Now that we have the error bound of B-spline approximations in one dimension, we will extend the results to arbitrary dimensions.
We point out that the space $L_2([a,b])$ is a Hilbert space~\citep{balakrishnan2012applied}.
We consider $n$ Hilbert spaces $L_2([a_i, b_i])$ for $i = 1, 2, \cdots, n$. We define the inner products of two $n$-dimensional functions $s, g \in L_2([a_1, b_1]\times \cdots \times [a_n, b_n])$ as
\small
\begin{equation}
    \langle s,g \rangle :=\int_{a_n}^{b_n} \mathinner{\cdotp\mkern-2mu\cdotp\mkern-2mu\cdotp}  \int_{a_1}^{b_1}  s(x_1,\mathinner{\cdotp\mkern-2mu\cdotp\mkern-2mu\cdotp},x_n)g^*(x_1,\mathinner{\cdotp\mkern-2mu\cdotp\mkern-2mu\cdotp},x_n) dx_1 \mathinner{\cdotp\mkern-2mu\cdotp\mkern-2mu\cdotp} dx_n,
\end{equation}
\normalsize
and we say a function $s: \mathbb{R}^n \rightarrow \mathbb{R}$ is square-integrable if
\begin{equation}
    \langle s,s \rangle = \int_{a_n}^{b_n} \cdots  \int_{a_1}^{b_1}  |s(x_1,\cdots,x_n)|^2 dx_1 \cdots dx_n < \infty.
\end{equation}

Now we present the following lemma to bound the approximation error of $n$-dimensional B-splines.
\begin{lemma}
\label{lm:n_dim_b_spline-bound}
Given a set positive natural numbers $d_1, \cdots, d_n$ and a $d$-time differentiable function $s(x_1,x_2, \cdots,x_n)\in L_2([a_1,b_1]\times[a_2,b_2]\times \cdots \times [a_n,b_n])$. Assume $d \geq \max\{d_1, \cdots, d_n\}$, then given any $\epsilon>0$, there exist $\bar \ell_i \in \mathbb{N}^+$ of control points for each component $i=1,...,n$, such for all $\ell_i \geq \bar \ell_i, \forall i = 1, \cdots, n$, there exists a control points realization such that
\rev{
\begin{equation}
\Vert s-\hat s \Vert_{2}\leq \epsilon,
\end{equation}
}
where
\begin{equation}
\label{eq:n_dim_b_spline_approx}
\begin{aligned}
    \hat{s}(x_1,\cdots,x_n)
    = \sum_{i_1=1}^{\ell_1} \cdots \sum_{i_n=1}^{\ell_n} c_{i_1,\cdots,i_n}B_{i_1, d_1}(x_1)\cdots B_{i_n, d_n}(x_n).
\end{aligned}
\end{equation}
\end{lemma}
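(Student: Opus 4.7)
The plan is to prove this by induction on the dimension $n$, bootstrapping from Theorem~\ref{thm:b_spline_error_bound}, implemented cleanly via a tensor product of one-dimensional B-spline projection operators. For each coordinate $i$, let $P_i$ denote a bounded linear operator mapping $L_2([a_i,b_i])$ onto the span of $\{B_{j,d_i}\}_{j=1}^{\ell_i}$ (for instance the $L_2$-orthogonal projection or the Schoenberg quasi-interpolant), extended to act only on the $i$-th variable of functions on the $n$-dimensional box. Each $P_i$ admits an operator norm bound $M$ independent of $\ell_i$ (a classical fact about B-spline projections), and Theorem~\ref{thm:b_spline_error_bound} gives $\|(I-P_i)g\|_{L_2}\to 0$ as $\ell_i\to\infty$ for every $d$-time differentiable $g$. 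The composition $\hat s := P_1 P_2 \cdots P_n s$ automatically has the tensor-product form in~\eqref{eq:n_dim_b_spline_approx}: since each $P_i$ produces an expansion in $\{B_{j,d_i}\}$ along $x_i$, the nested action yields coefficients $c_{i_1,\ldots,i_n}$ with the required structure.

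The error would then be controlled by the telescoping identity
\begin{equation*}
s - P_1 \cdots P_n s \;=\; \sum_{k=1}^{n} P_1 \cdots P_{k-1}\,(I - P_k)\,s,
\end{equation*}
which, together with $\|P_i\|\le M$ and the triangle inequality, yields
\begin{equation*}
\|s-\hat s\|_2 \;\le\; \sum_{k=1}^{n} M^{k-1}\,\|(I-P_k)\,s\|_2.
\end{equation*}
By Fubini, $\|(I-P_k)s\|_2^2 = \int \|(I-P_k)\,s(\cdot,x_{-k})\|_{L_2([a_k,b_k])}^2\,dx_{-k}$, where $x_{-k}$ denotes the remaining variables and the inner norm is precisely the one-dimensional B-spline approximation error of the slice $s(\cdot,x_{-k})$ along the $k$-th axis. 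I would choose each $\ell_k$ large enough that this integrated error is at most $\epsilon/(nM^{n-1})$; summing over $k$ then gives $\|s-\hat s\|_2\le\epsilon$.

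The main obstacle is twofold: (i) establishing the uniform operator-norm bound $M$ on the 1D B-spline operators $P_i$, and (ii) ensuring the pointwise-in-$x_{-k}$ 1D errors can be integrated to yield a small $L_2$ bound. Both are handled by using a quantitative form of Theorem~\ref{thm:b_spline_error_bound} in which the approximation error is controlled by sup-norm bounds on the $d$-th derivatives of $s$ along the axis being approximated; because $s$ is $d$-time differentiable on the compact box $[a_1,b_1]\times\cdots\times[a_n,b_n]$, these sup-norms are finite, giving the required uniformity and permitting dominated convergence when integrating out the auxiliary variables. Once these two ingredients are in place, the telescoping estimate delivers the stated $\epsilon$-approximation.
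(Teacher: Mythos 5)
Your proof is correct, but it takes a genuinely different route from the paper's. The paper fixes the optimal control-point tensor, invokes Theorem~\ref{thm:b_spline_error_bound} together with separability to control the one-dimensional error along $x_1$ at the knot points of the remaining variables, and then argues that $(s-\hat s)(s-\hat s)^*$ is Lipschitz so that the bound at knot points transfers to all of $[a_2,b_2]\times\cdots\times[a_n,b_n]$; integrating out one variable at a time produces a recursive chain of bounds $\epsilon_{x_1}, \epsilon_{x_1,x_2},\dots$, each vanishing as the $\ell_i\to\infty$. You instead use the classical tensor-product argument: coordinate-wise projections $P_1,\dots,P_n$, the telescoping identity $s-P_1\cdots P_n s=\sum_k P_1\cdots P_{k-1}(I-P_k)s$, and Fubini to reduce each term to an integrated one-dimensional error. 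Your route is arguably cleaner: taking $P_k$ to be the $L_2$-orthogonal projection onto the spline space in the $k$-th variable gives $M=1$ for free (a coordinate-wise orthogonal projection on a tensor-product Hilbert space has norm one), and since the orthogonal projection error is dominated by the error of the specific approximant in Theorem~\ref{thm:b_spline_error_bound}, each $\|(I-P_k)s\|_2$ is controlled by $C_{d}\,T_k^{d}$ times the integrated norm of $\partial_k^{d}s$, which tends to zero as $\ell_k\to\infty$. This avoids the paper's reliance on the Lipschitz constant of $\hat s$, which depends on the control points and the increasingly concentrated basis functions and is therefore the more delicate ingredient of the paper's argument. The one point you share with the paper and should state explicitly is the regularity needed to make the quantitative 1D bound integrable in the remaining variables: ``$d$-time differentiable'' alone does not guarantee bounded $d$-th partials on the box, so you should assume (as the paper implicitly does) that $\lVert\partial_k^{d}s\rVert$ is finite, e.g.\ by requiring the $d$-th derivatives to be continuous or square-integrable.
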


\begin{proof}
(Lemma~\ref{lm:n_dim_b_spline-bound})
See Section~\ref{sec:proof_bspline_nd}.
\end{proof}

On the other hand, we know that neural networks are universal approximators~\citep{hornik1989multilayer, leshno1993multilayer}, \ie with large enough width or depth a neural network can approximate any function with arbitrary precision. 
We first show that given some basic assumptions on the solution of the physics problems, the optimal control points are continuous in the system and domain parameters $u$ and $\alpha$, thus can be approximated by neural networks.

In this following lemma, we show that with Assumption~\ref{asp:continuity_sol} and Assumption~\ref{asp:differentiable_sol}, the optimal control points are continuous in terms of the system and ICBC parameters. 

\begin{lemma}
\label{lem:ctrl_pts_continuity}
    For any $n \in \mathbb{N}^+$ and two $n$-dimensional surfaces $s_1, s_2 \in L_2([a_1,b_1]\times[a_2,b_2]\times \cdots \times [a_n,b_n])$ being the solution of the physics problem defined in~\eqref{eq:physics_law} with parameters $\alpha_1, u_1$ and $\alpha_2, u_2$. Assume Assumption~\ref{asp:continuity_sol} and Assumption~\ref{asp:differentiable_sol} hold.
    Let $C_1$ and $C_2$ be the two control points tensors that reconstruct $\hat s_1$ and $\hat s_2$. 
    For any $\epsilon > 0$, $\epsilon_1, \epsilon_2 > 0$, there exist $\delta_1, \delta_2 > 0$ such that $\|\alpha_1 - \alpha_2\| < \delta_1$, and $\|u_1 - u_2\| < \delta_2$, and control points tensors $C_1$ and $C_2$ with $\|C_1 - C_2\| < \delta(\epsilon)$ such that $\|s_1 - \hat s_1\|_2 < \epsilon_1$, $\|s_2 - \hat s_2\|_2 < \epsilon_2$, and $\|s_1 - s_2\|_2 < \epsilon$. Here $\delta(\epsilon) \rightarrow 0$ when $\epsilon \rightarrow 0$. 
\end{lemma}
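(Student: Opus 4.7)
The plan is to string together three ingredients: (a) continuity of the solution in the parameters, supplied by Assumption~\ref{asp:continuity_sol}; (b) existence of an arbitrarily accurate B-spline representation, supplied by Lemma~\ref{lm:n_dim_b_spline-bound}; and (c) stability of the coefficient-to-function map, which follows from the Riesz-basis property of tensor-product B-splines. First, I would apply Assumption~\ref{asp:continuity_sol}: given the target $\epsilon$, choose $\delta_1,\delta_2>0$ so that $\|\alpha_1-\alpha_2\|<\delta_1$ and $\|u_1-u_2\|<\delta_2$ (after any necessary domain mapping onto a common reference domain) imply $\|s_1-s_2\|_2<\epsilon$. This simultaneously delivers the last inequality in the conclusion and drives the final estimate on $\|C_1-C_2\|$.

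Next, I would construct a single B-spline subspace $V \subset L_2(\Omega)$ rich enough to represent both $s_1$ and $s_2$ at their prescribed accuracies. By Lemma~\ref{lm:n_dim_b_spline-bound}, applied separately to $s_1$ with tolerance $\epsilon_1$ and to $s_2$ with tolerance $\epsilon_2$ (using Assumption~\ref{asp:differentiable_sol} for the required smoothness), there exist per-axis control-point counts that yield approximations within these tolerances. Taking the coordinate-wise maximum of these two counts and a single equispaced, clamped knot vector in each coordinate produces a common space $V$ in which both solutions are still approximable within $\epsilon_1$ and $\epsilon_2$. I then define $C_1,C_2$ to be the control-point tensors of the $L_2$-orthogonal projections $\hat s_1 = P_V s_1$ and $\hat s_2 = P_V s_2$, which automatically gives $\|s_i-\hat s_i\|_2 \le \epsilon_i$.

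The remaining task is to bound $\|C_1-C_2\|$. Here I invoke the Riesz-basis property of the tensor-product B-splines $\{B_{i_1,d_1}\otimes\cdots\otimes B_{i_n,d_n}\}$: they are linearly independent and span the finite-dimensional space $V$, so there exists a constant $A=A(V)>0$ such that, for every coefficient tensor $C$,
\[
A\,\|C\|_2 \;\le\; \Bigl\|\sum_{i_1,\dots,i_n} c_{i_1,\dots,i_n}\,B_{i_1,d_1}(x_1)\cdots B_{i_n,d_n}(x_n)\Bigr\|_2.
\]
Applying this to $C=C_1-C_2$ and using that $P_V$ is a contraction,
\[
A\,\|C_1-C_2\|_2 \;\le\; \|\hat s_1-\hat s_2\|_2 = \|P_V(s_1-s_2)\|_2 \;\le\; \|s_1-s_2\|_2 < \epsilon,
\]
so choosing $\delta(\epsilon):=\epsilon/A$ closes the argument, with $\delta(\epsilon)\to 0$ as $\epsilon\to 0$.

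The main obstacle I anticipate is supplying the Riesz constant $A$ for the chosen tensor-product B-spline space: the classical one-dimensional result (with $A$ depending only on the order and knot spacing) must be lifted to the tensor product, which is standard but needs to be argued cleanly. A secondary subtlety is the treatment of $\alpha$, which can change the underlying domain $\Omega_i(\alpha)$; as hinted in the footnote to Assumption~\ref{asp:continuity_sol}, this is handled by a smooth reparameterization onto a reference domain so that all solutions and approximations live in the same $L_2$ space. Once these two points are in place, the three-step chain (continuity in parameters $\Rightarrow$ closeness of $s_1,s_2$ $\Rightarrow$ closeness of projections $\Rightarrow$ closeness of control points) yields the lemma.
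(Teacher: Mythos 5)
Your proposal is correct in outline but takes a genuinely different route from the paper. The paper's proof first builds surrogate functions $\bar s_1,\bar s_2$ by interpolating pointwise samples of $s_1,s_2$ on an $N_1\times\cdots\times N_n$ grid, bounds $\|s_i-\bar s_i\|_2$ via Lipschitzness of the solutions and then $\|\hat s_i-\bar s_i\|_2$ via Lemma~\ref{lm:n_dim_b_spline-bound}, and obtains the control points as solutions of a discrete least-squares fit $A_iC_i=b_i$; the bound $\|C_1-C_2\|<\delta(\epsilon)$ then comes from perturbation theory for least-squares problems, using $A_1=A_2$ and $|b_1-b_2|$ small. You instead work entirely in $L_2$: you take $\hat s_i=P_Vs_i$ for a common tensor-product spline space $V$, get $\|s_i-\hat s_i\|_2\le\epsilon_i$ directly from Lemma~\ref{lm:n_dim_b_spline-bound}, and convert $\|\hat s_1-\hat s_2\|_2\le\|s_1-s_2\|_2<\epsilon$ into a coefficient bound via the Riesz lower bound of the B-spline basis. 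Your route is cleaner on one point where the paper is loose: the paper infers $|b_1-b_2|<\delta'(\epsilon)$ (a bound on pointwise samples) from the $L_2$ bound $\|s_1-s_2\|_2<\epsilon$, which requires an extra argument (e.g.\ via the Lipschitz constants) that the orthogonal-projection formulation sidesteps entirely. Two details you should still nail down: (i) the ``common space $V$'' step is not literally a nesting of spline spaces for equispaced clamped knots, but it is fine because the explicit error bound $C_d\bigl((b-a)/(\ell-1)\bigr)^d\|s^{(d)}\|$ underlying Lemma~\ref{lm:n_dim_b_spline-bound} is monotone in $\ell$, so you can simply re-apply the lemma with the larger per-axis counts; (ii) the Riesz constant $A$ depends on $V$, hence on $\epsilon_1,\epsilon_2$, but since those are held fixed as $\epsilon\to 0$ you still get $\delta(\epsilon)=\epsilon/A\to 0$, consistent with the statement. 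With those two points made explicit, your argument is a valid (and arguably tighter) alternative proof.
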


\begin{proof}
(Lemma~\ref{lem:ctrl_pts_continuity}) 
See Section~\ref{sec:proof_ctrl_cont}.
\end{proof}

Lemma~\ref{lem:ctrl_pts_continuity} shows that the optimal control points exist and are continuous in $\alpha$ and $u$, thus can be approximated by neural networks with arbitrary precision given enough representation capability~\citep{hornik1989multilayer}. We restate the universal approximation theorem for neural networks in our context as follows, assuming the requirements for the neural network are met.
\footnote{The Borel space assumptions are met since we consider $L_2$ space which is a Borel space.}

\begin{theorem}
\label{thm:universal_approx_nn}
Assume Assumption~\ref{asp:continuity_sol} and~\ref{asp:differentiable_sol} hold. Given any $u$ and $\alpha$ in a finite parameter set, and the corresponding B-spline approximation control points tensor $C := [c]_{\ell_1 \times \cdots \times \ell_n}$, for the coefficient net $G_{\boldsymbol{\theta}}(u, \alpha)$ and $\forall \epsilon > 0$, when the network has enough width and depth, there is $\boldsymbol{\theta}^*$ such that 
\begin{equation}
\|G_{\boldsymbol{\theta}^*}(u, \alpha) - C \| \leq \epsilon. 
\end{equation}
\end{theorem}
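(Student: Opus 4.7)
The plan is to reduce the theorem to a standard application of the universal approximation theorem of \citet{hornik1989multilayer, leshno1993multilayer} applied to a tensor-valued continuous target function. The first step is to identify precisely what the coefficient network must learn: for each $(u,\alpha)$ in the parameter set, one needs a control point tensor $C(u,\alpha)$ that reconstructs the PDE solution $s$ within the desired tolerance through the B-spline expansion~\eqref{eq:n_dim_b_spline_approx}. Lemma~\ref{lm:n_dim_b_spline-bound} guarantees that such tensors exist, and Lemma~\ref{lem:ctrl_pts_continuity} guarantees that they can be chosen so that nearby parameters yield tensors that are close in norm. Together these give a well-defined, continuous target map $(u,\alpha) \mapsto C(u,\alpha)$ on the parameter domain under Assumptions~\ref{asp:continuity_sol} and~\ref{asp:differentiable_sol}.

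Second, I would apply universal approximation entry-wise to the $\ell_1 \ell_2 \cdots \ell_n$ scalar components of the continuous target $C(u,\alpha)$. Fix $\epsilon' := \epsilon / \sqrt{\ell_1 \ell_2 \cdots \ell_n}$ and, for each scalar entry, invoke the classical theorem to produce an MLP of suitable width (and depth) that approximates that entry to within $\epsilon'$ in the sup norm over the compact parameter set. Stacking these single-output approximators into a vector-valued network with $\ell_1 \ell_2 \cdots \ell_n$ linear output heads and reshaping into the tensor form yields a coefficient network $G_{\boldsymbol{\theta}^*}$ satisfying $\|G_{\boldsymbol{\theta}^*}(u,\alpha) - C\| \leq \epsilon$ by summing squared entry-wise errors under the Frobenius (or analogous) norm. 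For the literally-finite parameter set stated in the theorem, this step is even simpler, since an MLP with enough hidden units can exactly interpolate any finite collection of distinct input-output pairs.

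The main obstacle is ensuring a globally consistent, continuous selection $C(u,\alpha)$ over the full parameter domain, because Lemma~\ref{lem:ctrl_pts_continuity} only asserts local continuity and because the B-spline expansion can be non-unique when the basis is overcomplete relative to $s(u,\alpha,\cdot)$. I would resolve this by fixing a canonical selection rule, for instance the minimum-norm tensor obtained by applying the Moore-Penrose pseudoinverse of the fixed B-spline evaluation operator to the target solution; this selection inherits continuity in $(u,\alpha)$ from the continuity of $s$ established under Assumption~\ref{asp:continuity_sol}. Once this canonical continuous target is in place, the remainder of the argument is routine universal approximation, and the theorem follows.
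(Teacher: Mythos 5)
Your proposal is correct and follows essentially the same route as the paper, which disposes of this theorem in a one-sentence remark: Lemma~\ref{lem:ctrl_pts_continuity} gives continuity of the optimal control points in $(u,\alpha)$, and the universal approximation theorem of \citet{hornik1989multilayer} then yields the result. Your additional care --- applying approximation entry-wise with the $\epsilon/\sqrt{\ell_1\cdots\ell_n}$ budget, fixing a canonical (e.g.\ minimum-norm) selection to guarantee a globally continuous target map, and observing that a literally finite parameter set reduces the statement to exact interpolation --- fills in details the paper leaves implicit, but does not constitute a different approach.
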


Then, we combine Lemma~\ref{lm:n_dim_b_spline-bound} and Theorem~\ref{thm:universal_approx_nn} to prove the universal approximation property of PI-BSNet (Theorem~\ref{thm:dbsn_universal_approximation}).

\begin{proof}
(Theorem~\ref{thm:dbsn_universal_approximation})
For any $u$ and $\alpha$, from Lemma~\ref{lm:n_dim_b_spline-bound} we know that there is $\ell_1, \cdots, \ell_n$ by taking the upper bounds of control point numbers \rev{(by setting $\ell_i \geq \bar \ell_i, \forall i = 1, \cdots, n$)}, and the control points realization $C := [c]_{\ell_1 \times \cdots \times \ell_n}$ such that $\Vert s(x_1,x_2,\cdots,x_n)-\hat{s}(x_1,x_2,\cdots,x_n)\Vert_2 \leq \epsilon_1$ for any $\epsilon_1 > 0$, where $\hat \sol$ is the B-spline approximation defined in~\eqref{eq:bs_approx_n_dim} with the control points tensor $C$. Then, from Theorem~\ref{thm:universal_approx_nn} we know that there is a DBSN configuration $G_{\boldsymbol{\theta}}(u, \alpha)$ and corresponding parameters $\boldsymbol{\theta}^*$ such that $\|G_{\boldsymbol{\theta}^*}(u, \alpha) - C \| \leq \epsilon_2$ for any $\epsilon_2 > 0$.
Since B-spline functions are defined on bounded domains and are continuous and Lipschitz~\citep{prautzsch2002bezier, kunoth2018foundations}, and $\tilde \sol$ and $\hat \sol$ are weighted sum of B-spline basis functions with weights $G_{\boldsymbol{\theta}^*}(u, \alpha)$ and $C$ (see~\eqref{eq:bs_approx_n_dim}), we know that $\|\tilde \sol - \hat \sol\|_2 \leq L \epsilon_2$ for some positive constant $L$. Then by triangle inequality of the $L_2$ norm, we have
\begin{equation}
    \|\tilde \sol - \sol \|_2 \leq \|\tilde \sol - \hat \sol \|_2 + \|\hat \sol - \sol \|_2 \leq \epsilon_1 + L \epsilon_2.       
\end{equation}
For any $\epsilon > 0$ we can find $\epsilon_1$ and $\epsilon_2$ such that $\epsilon = \epsilon_1 + L \epsilon_2$ to bound the norm.
\end{proof}

\subsection{Generalization Error Bounds (Theorem~\ref{thm:generalization_multi})}

In this section, we show proof of Theorem~\ref{thm:generalization_multi} to bound the generalization error of PI-BSNet on families of PDEs. 

We start by giving a lemma on PI-BSNet generalization error for fixed elliptic and parabolic PDEs.
From Section~\ref{sec:theoretical_analysis} we know that PI-BSNets are universal approximators of PDEs. From Section~\ref{sec:proposed_method} we know that the physics loss $\mathcal{L}_p$ is imposed over the domain of interest. Given this, we have the following lemma on the generalization error of PI-BSNet for elliptic and parabolic PDEs with \textit{fixed} parameters, adapted from~\citep[Theorem 6]{wang2023generalizable} and~\cite[Theorem 2.4]{peng2020accelerating}. 

\begin{definition}
Let $\Omega \subset \mathbb{R}^{n}$ be a domain. We define the regularity of $\Omega$ as
\begin{equation}
    R_{\Omega}:=\inf _{x \in \Omega, r>0} \frac{|B(x, r) \cap \Omega|}{\min \left\{|\Omega|, \frac{\pi^{n / 2} \, r^{n}}{\Gamma(n / 2+1)}\right\}},
\end{equation}
where $B(x, r):=\left\{y \in \mathbb{R}^{n} \mid\|y-x\| \leq r\right\}$ and $|(\cdot)|$ is the Lebesgue measure of a set $(\cdot)$.
\end{definition}

\begin{lemma}
\label{thm:full_pde_constraint}
\rev{\citep[Theorem 5]{wang2025generalizable}}
Assume the PDE is elliptic or parabolic, for any fixed parameters $\alpha$ and $u$, suppose that $\Omega \in \mathbb{R}^{n}$ is a bounded domain, $s \in C^0(\bar{\Omega}) \cap C^2(\Omega)$ is the solution to the PDE of interest where $\bar\Omega$ is the closure of $\Omega$, $\mathcal{F}(s, x) = 0, x \in \Omega$ defines the PDE, and $\mathcal{B}(s, x) = 0, x \in \Omega_b$ is the boundary condition. Let ${G}_{\theta}$ denote a PI-BSNet parameterized by 
$\theta$ and $\tilde{\sol}$ the solution predicted by PI-BSNet. If the following conditions holds:
\begin{enumerate}
    
    \item \rev{$\mathbb{E}_{x \sim \mathrm{Unif}(\Omega_b(\alpha))} \left[ \left| \mathcal{B}(G_\theta, x) \right| \right] < \delta_1$, where $\mathrm{Unif}(\Omega_b)$ is the uniform distribution in $\Omega_b$.}

    \item 
    \rev{$\mathbb{E}_{x \sim \mathrm{Unif}(\Omega(\alpha))} \left[ \left| \mathcal{F}(G_{\theta}, x) \right| \right] < \delta_2$, where $\mathrm{Unif}(\Omega)$ is the uniform distribution in $\Omega$.}
    \item 
    $G_{\theta}$, $\mathcal{F}({G}_{\theta}, \cdot)$, $s$ are $\frac{l}{2}$ Lipschitz continuous on $\Omega$.
\end{enumerate} 
Then the error of $\tilde{\sol}$ over $\Omega$ is bounded by
\begin{equation}
\label{eq:pde_constraint_bound}
    \sup _{x \in \Omega}\left|\tilde\sol(x)-\sol(x)\right| \leq \tilde \delta_1 + M \tilde \delta_2
\end{equation}
where $M$ is a constant depending on $\Omega$, $\Omega_b$ and $\mathcal{F}$, and
\begin{equation}
\begin{aligned}
    \tilde \delta_1 & = \max \left\{\frac{2 \delta_1 |\Omega_b| }{R_{\Omega_b}|\Omega_b|}, 2 l \cdot\left(\frac{\delta_1 |\Omega_b| \cdot \Gamma(\frac{n+1}{2})}{l R_{\Omega_b} \cdot \pi^{ (n-1) / 2}}\right)^{\frac{1}{n}}\right\}, \\
    \tilde \delta_2 & = \max \left\{\frac{2\delta_2 |\Omega|}{R_{\Omega}|\Omega|}, 2 l \cdot\left(\frac{\delta_2 |\Omega| \cdot \Gamma(n/ 2+1)}{l R_{\Omega} \cdot \pi^{n / 2}}\right)^{\frac{1}{n+1}}\right\},
\end{aligned}
\end{equation}
where $R_{(\cdot)}$ is the regularity of $(\cdot)$, $|(\cdot)|$ is the Lebesgue measure
of a set $(\cdot)$ and $\Gamma$ is the Gamma function.
\end{lemma}

We then prove Theorem~\ref{thm:generalization_multi} to bound the generalization error for PI-BSNet on the family of PDEs.

\begin{proof}
(Theorem~\ref{thm:generalization_multi} )
The goal is to prove~\eqref{eq:pde_constraint_bound_multi} holds for any $u \in \mathcal{U}$ and $\alpha \in \mathcal{A}$. Without loss of generality, we pick arbitrary $u_1 \in \mathcal{U}$ and $\alpha_1 \in \mathcal{A}$ to evaluate the prediction error, and we denote the ground truth and PI-BSNet prediction as $\sol_1$ and $\tilde \sol_1$, respectively. \rev{From Assumption~\ref{asp:coverage_interval} we know that there are $u_2 \in \mathcal{U}_\text{train}$ and $\alpha_2 \in \mathcal{A}_\text{train}$ such that $\|u_1 - u_2\|_2 \leq \Delta u$, $\|\alpha_1 - \alpha_2\|_2 \leq \Delta \alpha$.} Let $\sol_2$ and $\tilde \sol_2$ denote the ground truth and PI-BSNet prediction on the PDE with parameters $u_2$ and $\alpha_2$. Since the conditions in Theorem~\ref{thm:generalization_multi} hold for all $u \in \mathcal{U}_\text{train}$ and $\alpha \in \mathcal{A}_\text{train}$, and $\tilde \delta_1$ and $\tilde \delta_2$ are taking the maximum among all $\alpha$, we know the following inequality holds due to Lemma~\ref{thm:full_pde_constraint}.
\begin{equation}
\label{eq:bound_s2_predict}
    \sup _{x \in \Omega(\alpha_1)}\left|\tilde\sol_2(x)-\sol_2(x)\right| \leq \tilde \delta_1 + M \tilde \delta_2,
\end{equation}
where $M$ is a constant depending on $\Omega(\alpha_2)$, $\Omega_b(\alpha_2)$ and $\mathcal{F}$, and $\tilde \delta_1$, $\tilde \delta_2$ are given by~\eqref{eq:tilde_delta_multi}. Note that the domain considered is $\Omega(\alpha_1)$, as eventually we will bound the error in this domain. Necessary mapping of the domain is applied here and in the rest of the proof when $\alpha_1 \neq \alpha_2$.

Since $\mathcal{A}$ and $\mathcal{U}$ are bounded, and from Assumption~\ref{asp:continuity_sol} we know the PDE solution is continuous in $u$ and $\alpha$, we know the solution is Lipschitz in $u$ and $\alpha$. Then we have
\begin{equation}
\label{eq:bound_solution_cont}
    \sup _{x \in \Omega(\alpha_1)}\left|\sol_1(x)-\sol_2(x)\right| = \|\sol_1 - \sol_2\|_\infty \leq K_1 \|\sol_1 - \sol_2\|_2 \leq L_1 (\Delta u + \Delta \alpha),
\end{equation}
\rev{where $K_1$ and $L_1$ are some finite constants~\citep[Chapter 6 and Chapter 7]{evans2022partial}.}

Lastly, from Assumption~\ref{asp:nn_lipschitz} we know that the learned control points from the coefficient network $G_\theta(u, \alpha)$ are Lipschitz in $u$ and $\alpha$. Since the B-spline basis functions $B_{i,d}(x)$ are bounded by construction, we know that 
\begin{equation}
\label{eq:bound_prediction_cont}
    \sup _{x \in \Omega(\alpha_1)}\left|\tilde{\sol}_1(x)-\tilde{\sol}_2(x)\right| = \|\tilde{\sol}_1 - \tilde{\sol}_2\|_\infty \leq K_2 \|\tilde{\sol}_1 - \tilde{\sol}_2\|_2 \leq L_2 (\Delta u + \Delta \alpha),
\end{equation} 
where $K_2$ and $L_2$ are some finite constants.

Now, combining~\eqref{eq:bound_s2_predict},~\eqref{eq:bound_solution_cont} and~\eqref{eq:bound_prediction_cont}, by triangular inequality we get
\rev{
\begin{equation}
\begin{aligned}
    & \sup _{x \in \Omega(\alpha_1)}\left|\tilde\sol_1(x)-\sol_1(x)\right| \\
     \leq & \; \sup _{x \in \Omega(\alpha_1)}\left|\tilde\sol_2(x)-\sol_2(x)\right| + \sup _{x \in \Omega(\alpha_1)}\left|\sol_1(x)-\sol_2(x)\right|  + \sup _{x \in \Omega(\alpha_1)}\left|\tilde{\sol}_1(x)-\tilde{\sol}_2(x)\right| \\
    \leq & \;  \tilde \delta_1 + M \tilde \delta_2 + \tilde L (\Delta u + \Delta \alpha),
\end{aligned}  
\end{equation}
}
where $M$ is a constant depending on parameter sets $\mathcal{A}$, $\mathcal{U}$, domain functions $\Omega$, $\Omega_b$, and the PDE $\mathcal{F}$, $\tilde L = L_1 + L_2$ is a Lipschitz constant. 
\end{proof}

\section{Proof of Supporting Theorems and Lemmas}

\subsection{Proof of Theorem~\ref{thm:b_spline_error_bound}}
\label{sec:proof_bspline_1d}

\begin{proof}
(Theorem~\ref{thm:b_spline_error_bound})
From~\citep{jia1993approximation, strang1971fourier} we know that given $d$ the least square spline approximation of $\hat s(x)=\sum_{i=1}^\ell c_i B_{i,d}(x)$ can be obtained by applying pre-filtering, sampling and post-filtering on $s$, with $L_2$ error bounded by
\begin{equation} 
\label{ls_err}
\Vert s-\hat{s}\Vert_{2} \leq C_d\cdot T^d\cdot \Vert s^{(d)}\Vert,
\end{equation}
where $C_d$ is a known constant~\citep{blu1999quantitative}, $T$ is the sampling interval of the pre-filtered function, and $\Vert s^{(d)} \Vert$ is the norm of the $d$-th derivative of $s$ defined by
\begin{equation}
    \left\|s^{(d)}\right\|=\left(\frac{1}{2 \pi} \int_{-\infty}^{+\infty} \omega^{2 d}|S(\omega)|^2 d \omega\right)^{1/2},
\end{equation}
and $S(\omega)$ is the Fourier transform of $s(x)$. Note that given $s$ and $d$, $\left\|s^{(d)}\right\|$ is a known constant.

Then, from~\citep{unser1999splines} we know that the samples from the pre-filtered functions are exactly the control points $c_i$ that minimize the $L_2$ norm of the approximation error. In other words, the sampling time $T$ and the number of control points $\ell$ are coupled through the following relationship
\begin{equation}
    T=\frac{b-a}{\ell-1},
\end{equation}
since the domain is $[a,b]$ and it is divided into $\ell-1$ equispaced intervals for control points.
Then with $c_i$ being the samples with interval $T$, we can rewrite the error bound into
\begin{equation} 
\Vert s-\hat{s}\Vert_{2} \leq C_d\cdot \left(\frac{b-a}{\ell-1}\right)^d\cdot \Vert s^{(d)}\Vert.
\end{equation}
Thus we know that for $\forall \epsilon > 0$, we can find $\bar \ell \in \mathbb{N}^+$ such that for all $\ell \geq \bar \ell$
\begin{equation}
    \Vert s-\hat{s}\Vert_{2} \leq \frac{(b-a)^d C_d \Vert s^{(d)}\Vert}{(\ell-1)^d} \leq \epsilon,
\end{equation}
because for fixed $d$ the numerator is a constant, and the $L_2$ norm bound converges to 0 as $\ell \rightarrow \infty$.
\end{proof}

\subsection{Proof of Lemma~\ref{lm:n_dim_b_spline-bound}}
\label{sec:proof_bspline_nd}

\begin{proof}
(Lemma~\ref{lm:n_dim_b_spline-bound})
For given $\ell_1, \cdots, \ell_n$, let $C := [c]_{\ell_1 \times \cdots \times \ell_n}$ be the control points tensor such that $\Vert s(x_1,x_2,\cdots,x_n)-\hat{s}(x_1,x_2,\cdots,x_n)\Vert_2$ is minimized.
Let $(x_1', x_2', \cdots, x_n')$ denote the knot points in the $n$-dimensional space, \ie the equispaced grids where the control points are located. \rev{Then from Theorem~\ref{thm:b_spline_error_bound} and the separability of the B-splines~\citep[Section IV-E]{unser2002b}, we know that}
\begin{equation}
    \int_{a_1}^{b_1} (\sol - \hat \sol) (\sol - \hat \sol)^* (x_1, x_2', \cdots, x_n') d x_1 \leq \epsilon_{x_1},
\end{equation}
where $\epsilon_{x_1} = \frac{(b-a)^{d_1} C_{d_1} \Vert s^{(d_1)}\Vert}{(\ell_1-1)^{d_1}}$. This shows that the $L_2$ norm along the $x_1$ direction at any knots points $(x_2', \cdots, x_n')$ is bounded.
Now we show the following is bounded
\begin{equation}
    \int_{a_2}^{b_2} \int_{a_1}^{b_1} (\sol - \hat \sol) (\sol - \hat \sol)^* (x_1, x_2, x_3', \cdots, x_n') d x_1 d x_2.
\end{equation}

We argue that $\sol$ is Lipschitz as it is defined on a bounded domain and is $d$-time differentiable, and $\hat \sol$ is also Lipschitz as B-spline functions of any order are Lipschitz~\citep{prautzsch2002bezier, kunoth2018foundations} and $C$ is finite. Then we know that $(\sol - \hat \sol) (\sol - \hat \sol)^*$ is Lipschitz with some Lipschitz constant $L_{x_i}$ along dimension $i$ for $i = 1, 2, \cdots, n$.

\rev{
Let $\{x_i^{(j)}\}_{j=1}^{\ell_i}$ denote the equispaced knot points on $[a_i,b_i]$ with spacing $\frac{b_i - a_i}{\ell_i - 1}$. Define a partition $\{I_{i,j}\}_{j=1}^{\ell_i}$ of $[a_i,b_i]$ such that for all $x_i \in I_{i,j}$, $|x_i - x_i^{(j)}| \le \frac{b_i - a_i}{2(\ell_i - 1)}$.
Then for $i=2$, for all $x_2 \in I_{2,j}$,
\begin{equation}
\label{eq:lipschitz_x2_local}
    |(\sol - \hat \sol) (\sol - \hat \sol)^* (x_1, x_2, x_3', \cdots, x_n') 
    - (\sol - \hat \sol) (\sol - \hat \sol)^* (x_1, x_2^{(j)}, x_3', \cdots, x_n')|
    \leq L_{x_2} \frac{b_2 - a_2}{2(\ell_2 - 1)}.
\end{equation}
Then we have
\small
\begin{align}
    & \int_{a_2}^{b_2} \int_{a_1}^{b_1} 
    (\sol - \hat \sol) (\sol - \hat \sol)^* (x_1, x_2, x_3', \cdots, x_n') 
    d x_1 d x_2 \\
    =\; & 
    \sum_{j=1}^{\ell_2}
    \int_{I_{2,j}} \int_{a_1}^{b_1} 
    (\sol - \hat \sol) (\sol - \hat \sol)^* (x_1, x_2, x_3', \cdots, x_n') 
    d x_1 d x_2 \notag \\
    \leq \; & 
    \sum_{j=1}^{\ell_2}
    \int_{I_{2,j}} \int_{a_1}^{b_1} 
    (\sol - \hat \sol) (\sol - \hat \sol)^* (x_1, x_2^{(j)}, x_3', \cdots, x_n') 
    d x_1 d x_2 \notag \\ 
    & \quad + 
    \sum_{j=1}^{\ell_2}
    \int_{I_{2,j}} \int_{a_1}^{b_1} 
    |(\sol - \hat \sol) (\sol - \hat \sol)^* (x_1, x_2, x_3', \cdots, x_n') 
    - (\sol - \hat \sol) (\sol - \hat \sol)^* (x_1, x_2^{(j)}, x_3', \cdots, x_n')|
    d x_1 d x_2 \label{eq:triangle_ineq_x2}\\
    \leq \; & 
    \sum_{j=1}^{\ell_2}
    \int_{I_{2,j}} \int_{a_1}^{b_1} 
    (\sol - \hat \sol) (\sol - \hat \sol)^* (x_1, x_2^{(j)}, x_3', \cdots, x_n') 
    d x_1 d x_2
    + \int_{a_2}^{b_2} \int_{a_1}^{b_1} 
    L_{x_2} \frac{b_2 - a_2}{2(\ell_2 - 1)} 
    d x_1 d x_2 \label{eq:lipschitz_x2}\\
    \leq \; & 
    (b_2 - a_2) 
    \left [\epsilon_{x_1} 
    + L_{x_2} \frac{(b_2 - a_2) (b_1 - a_1)}{2(\ell_2 - 1)} \right] 
    := \epsilon_{x_1, x_2},
\end{align}
\normalsize
where~\eqref{eq:triangle_ineq_x2} is the triangle inequality of norms, and~\eqref{eq:lipschitz_x2} is due to the Lipschitz-ness of the function.

Similarly we can show the bound when we integrate the next dimension
\small
\begin{align}
    & \int_{a_3}^{b_3} \int_{a_2}^{b_2} \int_{a_1}^{b_1} 
    (\sol - \hat \sol) (\sol - \hat \sol)^* (x_1, x_2, x_3, x_4', \cdots, x_n') 
    d x_1 d x_2 d x_3 \\
    \leq \; & 
    \sum_{j=1}^{\ell_3}
    \int_{I_{3,j}} \int_{a_2}^{b_2} \int_{a_1}^{b_1} 
    (\sol - \hat \sol) (\sol - \hat \sol)^* (x_1, x_2, x_3^{(j)}, x_4', \cdots, x_n') 
    d x_1 d x_2 d x_3  +
    \int_{a_3}^{b_3} \int_{a_2}^{b_2} \int_{a_1}^{b_1} 
    L_{x_3} \frac{b_3 - a_3}{2(\ell_3 - 1)} 
    d x_1 d x_2 d x_3 \\
    \leq \; & 
    (b_3 - a_3)
    \left[
    \epsilon_{x_1, x_2}
    + L_{x_3} \frac{(b_3 - a_3)(b_2 - a_2)(b_1 - a_1)}{2(\ell_3 - 1)}
    \right]
    := \epsilon_{x_1, x_2, x_3}.
\end{align}
\normalsize
}
We know that $\epsilon_{x_1, x_2, x_3} \rightarrow 0$ when $\ell_i \rightarrow \infty$ for $i = 1, 2, 3$.
By keeping doing this, recursively we can find the bound $\epsilon_{x_1, \cdots, x_n}$ that
\begin{equation}
    \int_{a_n}^{b_n} \cdots \int_{a_1}^{b_1} (\sol - \hat \sol) (\sol - \hat \sol)^* (x_1, \cdots, x_n) d x_1 \cdots d x_n \leq \epsilon_{x_1, \cdots, x_n},
\end{equation}
where the left hand side is exactly $\Vert s(x_1,x_2,\cdots,x_n)-\hat{s}(x_1,x_2,\cdots,x_n)\Vert_2^2$, and the right hand side $\epsilon_{x_1, \cdots, x_n} \rightarrow 0$ when $\ell_i \rightarrow \infty$ for all $i = 1, 2, \cdots, n$. Thus for any $\epsilon > 0$, we can find $\bar \ell_i$ for $i = 1, 2, \cdots, n$ such that for all $\ell_i \geq \bar \ell_i$, there exists a control points realization such that 
\begin{equation}
\Vert s(x_1,x_2,\cdots,x_n)-\hat{s}(x_1,x_2,\cdots,x_n)\Vert_2 \leq \epsilon.
\end{equation}
    
\end{proof}

\subsection{Proof of Lemma~\ref{lem:ctrl_pts_continuity}}
\label{sec:proof_ctrl_cont}

\begin{proof}
(Lemma~\ref{lem:ctrl_pts_continuity}) 
From Assumption~\ref{asp:continuity_sol} we know that there exists $\delta_1, \delta_2 > 0$ such that $\|\alpha_1 - \alpha_2\| < \delta_1$, and $\|u_1 - u_2\| < \delta_2$, and $\|s_1 - s_2\|_2 < \epsilon$.

Now we need to prove that there exist control points tensors $C_1$ and $C_2$ with $\|C_1 - C_2\| < \delta(\epsilon)$ such that $\|s_1 - \hat s_1\|_2 < \epsilon_1$, $\|s_2 - \hat s_2\|_2 < \epsilon_2$. We prove by construction.

We first construct surrogate functions $\bar s_1$ and $\bar s_2$ by interpolation of $s_1$ and $s_2$, then find B-spline approximations $\hat s_1$ and $\hat s_2$ of the surrogate functions. The relationships between $s$, $\bar s$ and $\hat{s}$ are visualized in Fig.~\ref{fig:s_relations}.

\begin{figure}
    \centering
    \includegraphics[width=0.55\textwidth]{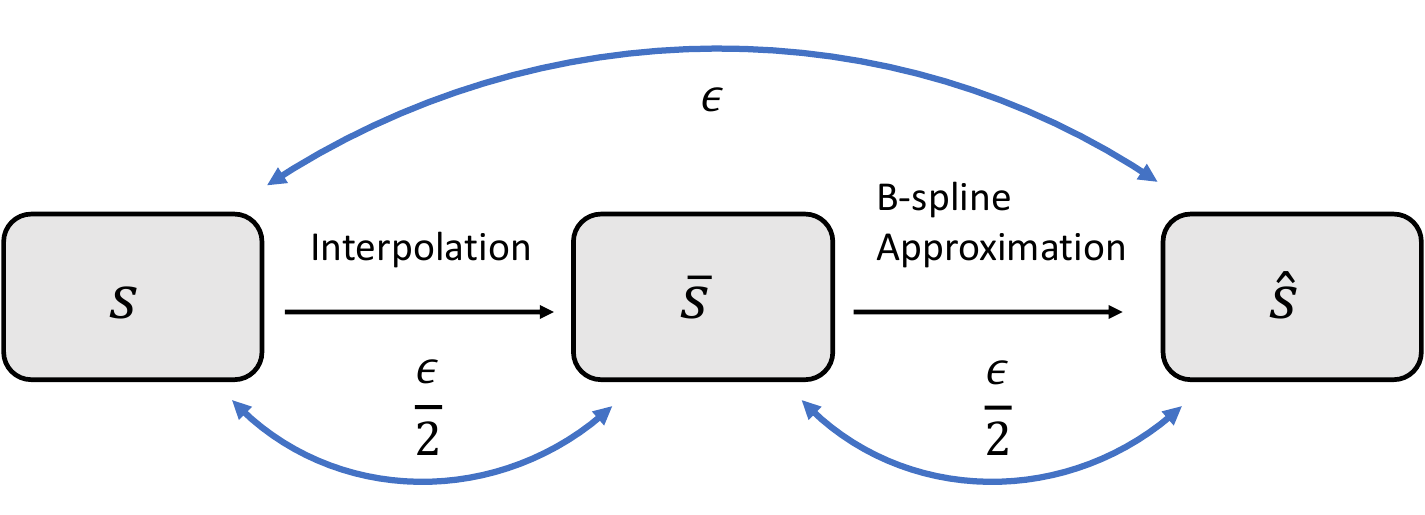}
    \caption{Relationships between ground truth $s$, interpolation $\bar s$, and B-spline approximation $\hat{s}$ used in the proof of Lemma~\ref{lem:ctrl_pts_continuity}.
    }
    \label{fig:s_relations}
\end{figure}

For the two surfaces $s_1$ and $s_2$, we first find two continuous functions $\bar s_1$ and $\bar s_2$ for approximation. Specifically, $\bar s_1$ and $\bar s_2$ are interpolations of sampled data on $s_1$ and $s_2$ with $N_i$ grids along $i$-th dimension, $i = 1, 2, \cdots, n$. Since Assumption~\ref{asp:differentiable_sol} holds, from Lemma~\ref{lm:n_dim_b_spline-bound} we know that there exist $d \in \mathbb{N}^+$, $\ell_i \in \mathbb{N}^+$ and control points $C_1$ and $C_2$ of dimension $\ell_1 \times \ell_2 \times \cdots \times \ell_n$ such that
$\|\hat s_1 - \bar s_1\|_2 < \epsilon_1/2$, $\|\hat s_2 - \bar s_2\|_2 < \epsilon_2/2$.
We also know that the optimal control points are obtained by solving the following least square (LS) problem to fit the sampled data on $s_1$ and $s_2$.
\begin{equation}
\begin{aligned}
    \sum_{i_1=1}^{\ell_1}\sum_{i_2=1}^{\ell_2} \cdots \sum_{i_n=1}^{\ell_n} c_{1, i_1,i_2,\cdots,i_n}B_{i_1, d_1}(x_1)B_{i_2, d_2}(x_2)\cdots B_{i_n, d_n}(x_n) = s_1(x_1,x_2,\cdots,x_n), \forall x \in \mathcal{D}_1, \\
    \sum_{i_1=1}^{\ell_1}\sum_{i_2=1}^{\ell_2} \cdots \sum_{i_n=1}^{\ell_n} c_{2, i_1,i_2,\cdots,i_n}B_{i_1, d_1}(x_1)B_{i_2, d_2}(x_2)\cdots B_{i_n, d_n}(x_n) = s_2(x_1,x_2,\cdots,x_n), \forall x \in \mathcal{D}_2,
\end{aligned}
\end{equation}
\rev{where $\mathcal{D}_1$ and $\mathcal{D}_2$ are the sets of all sampled data on $s_1$ and $s_2$ and are assumed to be sufficiently dense on $\Omega$.} Then, we can write the LS problem into the matrix form as follow.
\begin{equation}
\label{eq:LS_problem_matrix}
\begin{aligned}
    A_1 C_1 = b_1, \\
    A_2 C_2 = b_2,
\end{aligned}
\end{equation}
\rev{where $A_1 = A_2$ and $\|b_1 - b_2\|_2 < \delta'(\epsilon)$, since $s_1$ and $s_2$ are Lipschitz on the bounded domain with $\|s_1-s_2\|_2 < \epsilon$.}
Here $\delta'(\epsilon) \rightarrow 0$ as $\epsilon \rightarrow 0$. Without loss of generality, we consider cases where the sampled data on $s_1$ and $s_2$ is dense enough such that both $A_1$ and $A_2$ are full rank. Then by results from the LS problems with perturbation~\citep{wei1989perturbation}, we know that the difference of the LS solutions of the two problems in~\eqref{eq:LS_problem_matrix} is bounded by 
\begin{equation}
    \|C_1 - C_2\| < \delta(\epsilon),
\end{equation}
where $\delta(\epsilon) \rightarrow 0$ as $\epsilon \rightarrow 0$.

Since $s_1$ and $s_2$ are continuous functions defined on bounded domain, we know that both functions are Lipschitz. We denote $L_i$ the larger Lipschitz constants of the two functions along dimension $i = 1, 2, \cdots, n$, \ie $\forall x = [x_1, x_2, \cdots, x_n], x' = [x_1', x_2', \cdots, x_n'] \in \mathbb{R}^n$, 
\begin{equation}
\begin{aligned}
    |s_1(x)-s_1(x')| \leq L_1|x_1-x_1'| + L_2|x_2-x_2'| + \cdots + L_n|x_n-x_n'|, \\
    |s_2(x)-s_2(x')| \leq L_1|x_1-x_1'| + L_2|x_2-x_2'| + \cdots + L_n|x_n-x_n'|.
\end{aligned}
\end{equation}
Then we know
\begin{equation}
\begin{aligned}
    \|s_1-\bar s_1\|_2 \leq \frac{L_1 (b_1 - a_1)}{N_1} + \frac{L_2 (b_2 - a_2)}{N_2} + \cdots + \frac{L_n (b_n - a_n)}{N_n}, \\
    \|s_2-\bar s_2\|_2 \leq \frac{L_1 (b_1 - a_1)}{N_1} + \frac{L_2 (b_2 - a_2)}{N_2} + \cdots + \frac{L_n (b_n - a_n)}{N_n},
\end{aligned}
\end{equation}
since $\bar s_1$ and $\bar s_2$ are the interpolations of sampled data on $s_1$ and $s_2$ with $N_i$ grids along $i$-th dimension. We know that $\|s_1-\bar s_1\|_2 \rightarrow 0$ and $\|s_2-\bar s_2\|_2 \rightarrow 0$ when $N_i \rightarrow \infty$ for all $i$. Thus, we can find $N_1, N_2, \cdots, N_n$ such that $\|s_1-\bar s_1\|_2 < \epsilon_1/2$, $\|s_2-\bar s_2\|_2 < \epsilon_2/2$. Then by the triangle inequality we have
\begin{equation}
\begin{aligned}
    \|s_1 - \hat s_1\|_2 \leq \|s_1-\bar s_1\|_2 + \|\hat s_1 - \bar s_1\|_2 < \epsilon_1/2 + \epsilon_1/2 = \epsilon_1, \\
    \|s_2 - \hat s_2\|_2 \leq \|s_2-\bar s_2\|_2 + \|\hat s_2 - \bar s_2\|_2 < \epsilon_2/2 + \epsilon_2/2 = \epsilon_2.
\end{aligned}
\end{equation}
    
\end{proof}

\begin{figure}
    \centering
    \includegraphics[width=0.75\textwidth]{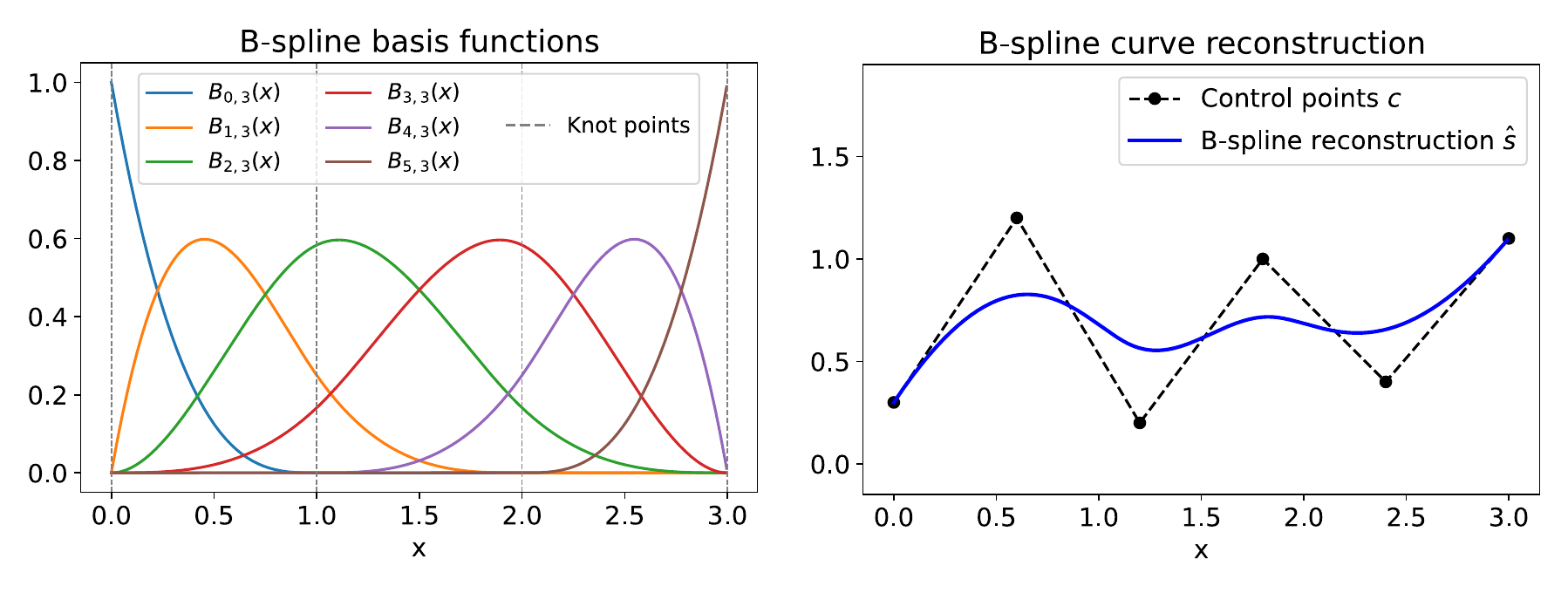}
    \caption{Visualization of B-spline basis functions (left) and B-spline reconstruction (right). Note that with the clamped knot points, $B_{0,3}(0) = B_{5,3}(3)=1$, so that the reconstruction $\hat{s}(0) = c_1$ and $\hat{s}(3) = c_\ell$ to enforce hard boundary satisfaction.
    }
    \label{fig:bspline_visualization}
\end{figure}

\section{Additional Results}

\subsection{Convex Hull Property of B-Splines}

Considering a one-dimensional B-spline of the form as $\hat\sol(x)= \cpt B_d(x)$, where \(x \in [a,b]\), we have
\rev{
\begin{equation}
\hat{s} \in \left[a,b\right] \rightarrow \left[\underline{c}, \overline{c}\right],
\end{equation}
}
where 
\[
\underline{c} = \min_{i = 1, \dots, \ell} c_i, \qquad \overline{c} = \max_{i = 1, \dots, \ell} c_i.
\]
This property is inherent to the Bernstein polynomials used to generate Bézier curves. Specifically, the Bézier curve is a subtype of the B-spline, and it is also possible to transform Bézier curves into B-splines and vice versa~\citep{prautzsch2002bezier}.

This property also holds in the multidimensional case when the B-spline is represented by a tensor product of the B-spline basis functions in~\eqref{eq:bs_approx_n_dim} ~\citep{prautzsch2002bezier}:
\rev{
\begin{equation}
\hat{s} \in \left[a_1, b_1\right] \times \cdots \times \left[a_n, b_n\right] \rightarrow \left[\underline{c}, \overline{c}\right],
\end{equation}
}
where 
\[
\underline{c} = \min_{\substack{i_1 = 1, \dots, \ell_1 \\ i_2 = 1, \dots, \ell_2 \\ \vdots \\ i_n = 1, \dots, \ell_n}} c_{i_1, i_2, \dots, i_n},
\qquad \overline{c} = \max_{\substack{i_1 = 1, \dots, \ell_1 \\ i_2 = 1, \dots, \ell_2 \\ \vdots \\ i_n = 1, \dots, \ell_n}} c_{i_1, i_2, \dots, i_n}.
\]

This property offers a practical tool for verifying the reliability of the results produced by the trained learning scheme. In the case of learning recovery probabilities, the approximated solution should provide values between $0$ and $1$. Since the number of control points is finite, a robust and reliable solution occurs if all generated control points are within the range \([0, 1]\), \ie
\[
\underline{c}=0 \qquad \overline{c}=1.
\]

\subsection{B-Spline Representation for General Domains}
\label{sec:general_domain}

In this section, we present results on using B-splines to represent a surface in arbitrary domains. Without loss of generality, we consider 2D cases. Specifically, we consider a original domain $(u,v) \in D \triangleq [0,1] \times [0,1]$, and a surface defined on a parametric domain via the following transformation
\begin{align*}
x(u,v) &= g_x(u,v), \\
y(u,v) &= g_y(u,v), \\
z(u,v) &= f(x(u,v), y(u,v)).
\end{align*} 
Note that since $g_x$ and $g_y$ are arbitrary, $(x,y)$ can lie in arbitrary domains of interest. 

We approximate the surface by reconstructing $x$, $y$ and $z$ using tensor-products B-spline defined on $(u,v)$, and with slight abuse of notation, we use $s$ to denote the approximation:
\begin{equation*}
\sol(u,v) = \sum_{i=1}^{n_u} \sum_{j=1}^{n_v} \mathbf{c}_{ij} B_{i,d}(u) B_{j,d}(v), \quad \sol(u,v) \in \mathbb{R}^3,
\end{equation*}
where the control points $\mathbf{c}_{ij}$ are given by
\[
\mathbf{c}_{ij} = \begin{bmatrix}
c_{ij}^x \\
c_{ij}^y \\
c_{ij}^z
\end{bmatrix} \in \mathbb{R}^3,
\]
and $B_{i,d}(u)$ and $B_{j,d}(v)$ are B-spline basis functions of degrees $d$ in the $u$ and $v$ directions, respectively, defined on clamped knot vectors as in Section~\ref{sec:bsplines_intro}. The vectors $\mathbf{c}_{ij}$ are the control points in $\mathbb{R}^3$.
With this representation, the surface components can be written as the following three separate B-spline functions
\begin{align*}
s_x(u,v) &= \sum_{i=1}^{n_u} \sum_{j=1}^{n_v} c_{ij}^x B_{i,d}(u) B_{j,d}(v), \\
s_y(u,v) &= \sum_{i=1}^{n_u} \sum_{j=1}^{n_v} c_{ij}^y B_{i,d}(u) B_{j,d}(v), \\
s_z(u,v) &= \sum_{i=1}^{n_u} \sum_{j=1}^{n_v} c_{ij}^z B_{i,d}(u) B_{j,d}(v).
\end{align*}

While the $(u,v)$ parameters lie on a regular meshgrid, the actual surface domain in the $(x,y)$ space is shaped through the mapping functions $g_x$ and $g_y$, and can also be approximated by B-splines.

To sample points on the surface, we can evaluate a meshgrid on $D$ and compute:
\begin{align*}
x_{k\ell} &= g_x(u_k, v_\ell), \\
y_{k\ell} &= g_y(u_k, v_\ell), \\
z_{k\ell} &= f(x_{k\ell}, y_{k\ell}),
\end{align*}
where $k = 1, \ldots, N_u$ and $\ell = 1, \ldots, N_v$ are the indices of sampled points in the $u$ and $v$ directions, respectively.

Similarly, the control points are arranged on a meshgrid over $D$. Due to the clamped nature of the B-spline basis functions, the first and last basis functions in each direction satisfy:
\begin{align*}
B_{1,d}(0) = 1, \quad & B_{n_u,d}(1) = 1, \\
B_{1,d}(0) = 1, \quad & B_{n_v,d}(1) = 1,
\end{align*}
and all other basis functions vanish at these boundary values. This property allows direct control over the surface boundary through the corresponding boundary control points. For example, we can write:
\begin{align}\label{bs_bc}
s_x(v)\big|_{u=0} &= \sum_{j=1}^{n_v} c_{1j}^x B_{j,d}(v), \\
s_x(v)\big|_{u=1} &= \sum_{j=1}^{n_v} c_{n_u j}^x B_{j,d}(v), \\
s_x(u)\big|_{v=0} &= \sum_{i=1}^{n_u} c_{i1}^x B_{i,d}(u), \\
s_x(u)\big|_{v=1} &= \sum_{i=1}^{n_u} c_{i n_v}^x B_{i,d}(u).
\end{align}

The same one-dimensional B-spline representation applies to the $y$ and $z$ components. This demonstrates that a subset of control points governs the shape and values of the surface along the boundary.

This method generalizes naturally to higher-dimensional problems, since each surface component is represented independently using B-spline functions.

In the following, we show an example of approximating a surface on an annulus with the methods above.
We consider a parametric domain \((u,v) \in [0,1] \times [0,1]\), and define a transformation to an annular domain in \(\mathbb{R}^2\) with inner radius \(r_{\text{inner}} = 1\) and outer radius \(r_{\text{outer}} = 2\).
The radial coordinate \(R(u)\) is defined as a convex combination of squared radii
\[
R(u) = \sqrt{(1 - u) r_{\text{inner}}^2 + u r_{\text{outer}}^2}.
\]
The angular coordinate is set as \(\theta(v) = 2\pi v\). The transformation to Cartesian coordinates is then given by
\begin{align*}
x(u,v) &= R(u) \cos(2\pi v), \\
y(u,v) &= R(u) \sin(2\pi v).
\end{align*}
This maps the unit square \([0,1]^2\) to an annular region in \((x,y)\)-space.

We define a scalar height function \(z(x,y)\) over the annulus, depending on the transformed radial and angular coordinates
\[
z(u,v) = 0.3 \sin(3 R(u)) \cos(10\pi v).
\]
Hence, we can represent the surface $z$ on $(x,y)$ with the $(u,v)$ coordinates as follows.
\[
\begin{bmatrix}
x(u,v) \\
y(u,v) \\
z(u,v)
\end{bmatrix} = 
\begin{bmatrix}
R(u) \cos(2\pi v) \\
R(u) \sin(2\pi v) \\
0.3 \sin(3 R(u)) \cos(10\pi v)
\end{bmatrix}.
\]

We then generate data on a $100 \times 100$ equispaced grid in the $(u,v)$ domain, and fit a B-spline surface of order 3 using 30 control points along each direction. We use this grid structure to independently learn a B-spline surface for each of the $x$, $y$, and $z$ coordinates, allowing us to represent the geometry of the annular surface.
The boundary conditions are enforced by separately computing the approximating B-splines in~\eqref{bs_bc} for each component along the boundary.
To determine the remaining control points, we solve a least squares problem based on data points.

Fig.~\ref{fig:annulus} visualizes the results. We can see that the B-splines are able to accurately reconstruct the surface on the annulus. Note that the transformation from $(u,v)$ to $(x,y)$ could be arbitrary other than the annulus, suggesting the possibility of future work to learn PDE solutions in arbitrary domains.

\begin{figure}
    \centering
    \includegraphics[width=0.99\textwidth]{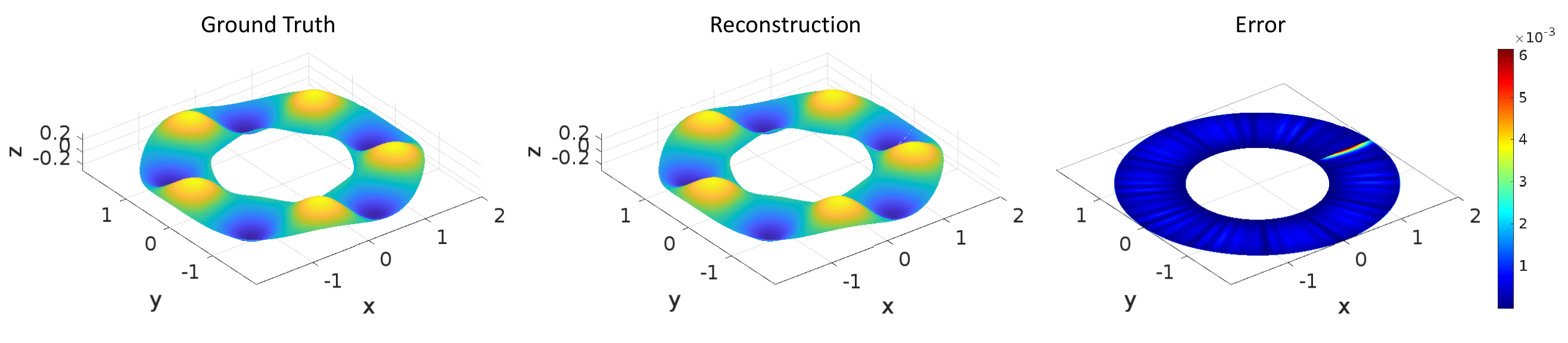}
    \caption{B-spline reconstruction of a surface on an annulus.
    }
    \label{fig:annulus}
\end{figure}

\subsection{Generalization Error Verification}

\rev{In this section, we conduct experiments to verify the derived generalization error bounds in Theorem~\ref{thm:generalization_multi}. Specifically, we consider the settings for the advection equation experiments in Section~\ref{sec:advection}. We train PI-BSNet for 2000 epochs, sample results at every 50 epochs, and visualize the \textit{maximum} prediction error over the PDE family (30 parametric PDE instances) vs. total training loss in Fig.~\ref{fig:generalization_error_advection}. It can be seen that the maximum prediction error is generally linearly depedent on the total training loss, which matches the provided theoretical results.}

\begin{figure}
    \centering
    \includegraphics[width=0.45\textwidth]{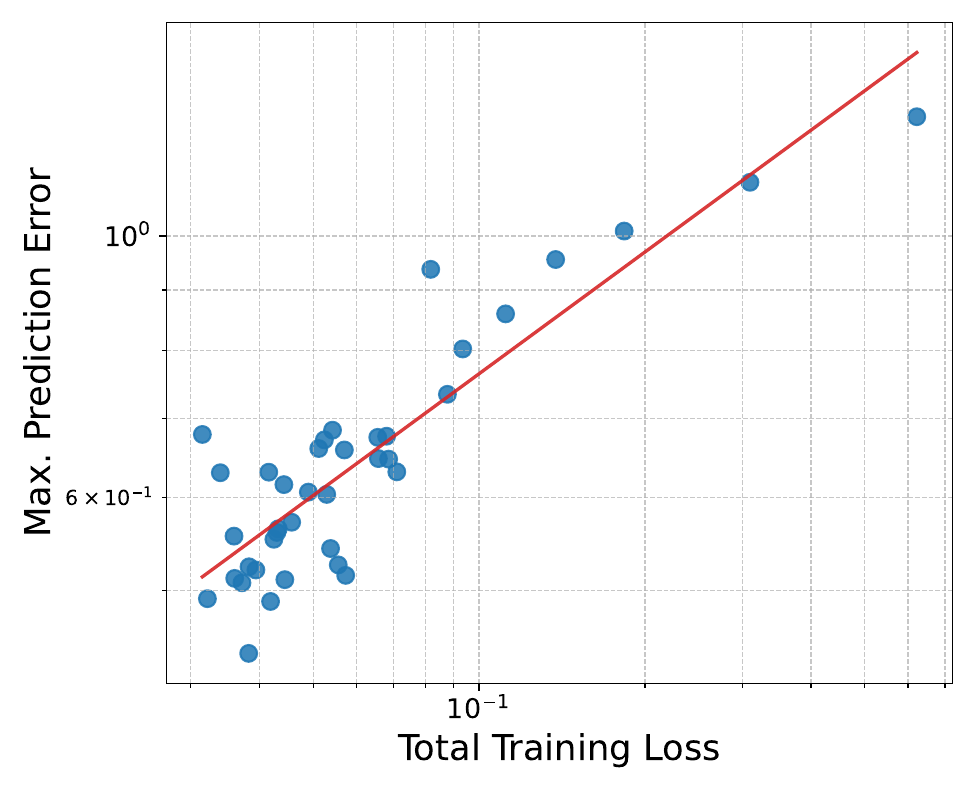}
    \caption{\rev{Maximum prediction error vs. total training loss for advection equations.}
    }
    \label{fig:generalization_error_advection}
\end{figure}

\subsection{Other Boundary Condition Types}

\rev{In this section, we provide experiment results for parametric PDEs with Neumann type boundary conditions. Specifically, we consider the following parametric diffusion equation
\begin{equation}
    \frac{\partial s}{\partial t}(x, t) 
    - u  \frac{\partial^{2} s}{\partial x^{2}}(x, t)  = 0,
\end{equation}
with IC
\begin{equation}
    s(x, 0) =\cos (\pi x),
\end{equation}
and the following Neumann BCs on both sides
\begin{equation}
    \frac{\partial s}{\partial x}(0, t)=0, \quad \frac{\partial s}{\partial x}(1, t)=0.
\end{equation}
The equation has the analytical solution as follows
\begin{equation}
    s(x, t)=\cos (\pi x) e^{-u \pi^2 t}.
\end{equation}
We consider $u \in [0.1, 1.5]$ and generated 50 training data and 10 testing data with uniformly randomly sampled $u$. We train the proposed PI-BSNet with number of control points $\ell_x = \ell_t = $, with B-spline order $d = 5$ for 5000 epochs. The data loss, PDE loss, and ICBC loss weights are set to 5, 1, 2 respectively. Note that we impose the Neumann BC as a loss function too since B-spline does not support direct assignment for derivatives. The coefficient network is a 3 layer fully connected NN with 128 hidden width at each layer.
We compare the proposed PI-BSNet with PINN and PI-DeepONet with similar architectures under the same setting. Fig.~\ref{fig:neumann_bc} shows the prediction visualization for different PDE parameter $u$, for all three methods. The training time and average prediction $L_2$ errors are reported in Table~\ref{tab:neumann_bc}. It can be seen that the proposed PI-BSNet achieves similar performance as baselines, but is much faster due to the compact representation.
}

\begin{figure}
    \centering
    \includegraphics[width=0.65\textwidth]{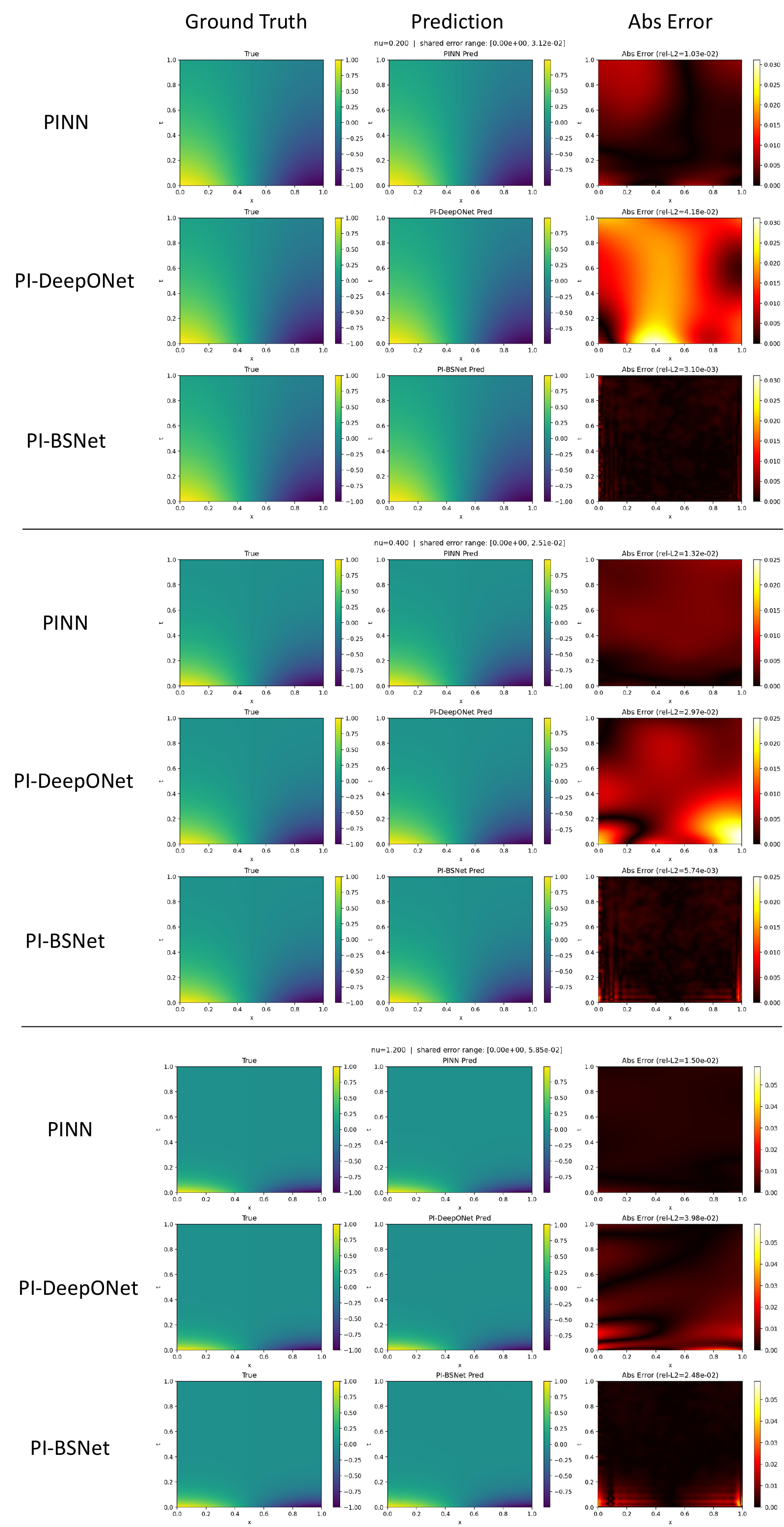}
    \caption{\rev{Test results on diffusion equations with Neumann boundary conditions.}
    }
    \label{fig:neumann_bc}
\end{figure}

\begin{table}[t]
\centering
\caption{Training time and average test relative $L_2$ error for the Neumann parametric diffusion problem.}
\label{tab:neumann_bc}
\begin{tabular}{lcc}
\hline
\textbf{Method} & \textbf{Training time (s)} & \textbf{Avg. test relative $L_2$ error} \\
\hline
PINN         & 914.4  & $1.697\times 10^{-2}$ \\
PI-DeepONet  & 1064.5 & $4.459\times 10^{-2}$ \\
PI-BSNet  & 148.6  & $1.932\times 10^{-2}$ \\
\hline
\end{tabular}
\end{table}

\section{Experiment Details}
\label{sec:experiment_details}

In this section, we provide details for the experiments in Section~\ref{sec:experiments}. The general procedures for training the proposed PI-BSNet is shown in Alg.~\ref{alg:pidbsn}. \rev{In all statistical results, test trials were selected by uniformly random sampling from the corresponding parameter spaces. All methods were evaluated on the same test sets to ensure a fair comparison.}

\begin{algorithm}[t]
\caption{Training Physics-Informed Deep B-Spline Networks (PI-BSNet)}
\label{alg:pidbsn}
\textbf{Given:} Training data $\mathcal{D} = \{(u, \alpha, \sol)\}_{N_{\text{train}}}$ \\
\textbf{Input:} B-spline order $d$; control points number $\ell$; loss weights $w_p$, $w_d$, $w_b$; epochs $E$ \\
\textbf{Output:} Trained PI-BSNet parameters $\boldsymbol{\theta}^*$

\begin{algorithmic}[1]

\State Compute B-spline basis $B_{d}$ and derivatives $B_{d}'$, $B_d''$ and higher derivatives if needed
\For{epoch $= 1$ to $E$}
    \For{each sample $(u, \alpha, s) \in \mathcal{D}$}
        \State \textbf{Predict control points:} $\tilde{C} \leftarrow G_{\boldsymbol{\theta}}(u, \alpha)$
        \State \textbf{Assign ICBCs:} 
        
        \qquad $\tilde{C}[0, :] \leftarrow$ IC, $\tilde{C}[:, \text{end}] \leftarrow$ BC \Comment{Specific control points assignment depending on actual ICBCs}
        \State \textbf{Assemble B-spline approximation:} $\tilde{s}(x) = G_{\boldsymbol{\theta}}(u, \alpha)(x)$ via~\eqref{eq:bs_approx_n_dim}
        \State \textbf{Evaluate physics residual:} 
        \State \quad $\partial_{x} \tilde{s}(x)$, $\partial_{xx} \tilde{s}(x)$ using $B_{d}$, $B_{d}'$, $B_{d}''$ via~\eqref{eq:b-spline_derivatives}
        \State \quad $\mathcal{F}(\tilde{s}, x, u)$ for all $x \in \mathcal{P}$
        \State \textbf{Compute losses:}
        \State \quad $\mathcal{L}_p = \frac{1}{|\mathcal{P}|} \sum_{x \in \mathcal{P}}  |\mathcal{F}(\tilde s, x, u)|^2$
        \State \quad $\mathcal{L}_d = \frac{1}{|\mathcal{D}|} \sum_{x \in \mathcal{D}} |s(x) - \tilde{s}(x)|^2$
        \State \quad If needed:  $\mathcal{L}_b = \frac{1}{|\mathcal{M}|} \sum_{x \in \mathcal{M}} |\mathcal{B}(\tilde s, x, u)|^2$
        \State \textbf{Update parameters:}
        \State \quad optimize $\boldsymbol{\theta}$  with loss $\mathcal{L} = w_p \mathcal{L}_p + w_d \mathcal{L}_d + w_b \mathcal{L}_b$ 
    \EndFor
\EndFor
\State \textbf{return} $\boldsymbol{\theta}^*$
\end{algorithmic}
\end{algorithm}

\subsection{Recovery Probabilities}
\textbf{Training Data:}
The convection diffusion PDE defined in~\eqref{eq:cde} and~\eqref{eq:cde_icbc} has analytical solution
\begin{equation}
\label{eq:cde_sol}
    \sol(x,t) = \int_0^t \frac{(\alpha-x)}{\sqrt{2 \pi \tau^{3}}} \exp \left(-\frac{\left((\alpha-x)-u \tau\right)^2}{2 \tau}\right) d\tau,
\end{equation}
where $\alpha$ is the parameter of the boundary of the set $\mathcal{C}_\alpha =\left\{x \in \mathbb{R}: x \geq \alpha\right\}$, and $u$ is the parameter in the system dynamics $dx_t = u \: dt + dw_t$. We use numerical integration to solve~\eqref{eq:cde_sol} to obtain ground truth training data for the experiments. We uniformly sample 40 random $u \in [0, 2]$ and $\alpha \in [0, 4]$ for training data generation.

\textbf{Network Configurations:}

\begin{itemize}[leftmargin=12pt]
\item \textbf{PI-BSNet (proposed):} We use 3-layer fully connected neural networks with ReLU activation functions for the coefficient network. The number of neurons for each hidden layer is set to be $64$. We use Adam as the optimizer.

\item \textbf{PINN}~\citep{raissi2019physics}: A 3-layer fully connected neural network with Tanh activations is used to approximate the solution directly. The PDE residuals are computed via automatic differentiation. Initial and boundary conditions losses are imposed. 

\item \textbf{EPINN}~\citep{wang2023exact}: A 3-layer fully connected network with Tanh activation is used. The exact solution form is preserved via the function template $g + \phi_x \phi_t f_\theta$, where $f_\theta$ is approximated by the network. The spatial-temporal encoding is based on domain-aware transformations using $\phi_x = \frac{a-x}{a-x_\text{min}}$ and $\phi_t = \frac{t}{T}$.

\item \textbf{SFHCPINN}~\citep{li2024physical}: Extends EPINN by applying sinusoidal feature embeddings (SIREN-style) to the input coordinates $(x,t)$ before passing them into the network. These features are linearly transformed and concatenated with parameters $(\lambda, a)$ before being passed to a 2-layer MLP with ReLU activations. The template $g + \phi_x \phi_t f_\theta$ is still enforced.

\item \textbf{HWPINN}~\citep{chen2023hard}: The same formulation $g + \phi_x \phi_t f_\theta$ is used as in EPINN while the standard hidden layer width is replaced by a wider fully connected MLP (two hidden layers of 64 neurons each with Tanh activation). This model emphasizes expressive capacity within the hard-constraint formulation.

\item \textbf{VS-PINN}~\citep{ko2025vs}: A learnable input transformation is applied to $(x,t)$ via trainable scaling factors before the hard-constrained template is enforced. This adds a degree of flexibility to adapt to unknown domain warping or scale mismatch, while preserving the $g + \phi_x \phi_t f_\theta$ form.

\item \textbf{CAN-PINN}~\citep{chiu2022can}: The hard-constrained formulation is used along with a finite-difference approximation to compute second-order spatial derivatives in the PDE residual. The model uses a standard MLP with two hidden Tanh layers (64 neurons each) and predicts the residual via custom finite difference steps.

\item \textbf{PI-DeepONet}~\citep{goswami2023physics}: A DeepONet structure is used with separate branch and trunk networks, each comprising two Tanh layers of 64 neurons. The outputs of the two networks are fused through an inner product after a linear transformation, enforcing parameter conditioning and spatial-temporal separation. Initial and boundary condition losses are imposed.

\item \textbf{PI-FNO}~\citep{li2024physics}: This model is implemented based on the Tensorized Fourier Neural Operator (TFNO2d) module, and it operates on a 4-channel input grid of shape $[\lambda, a, x, t]$. The architecture includes 4 spectral convolution layers with 12 Fourier modes in each direction and a hidden width of 32. PDE residuals are computed using automatic differentiation on the input grid, and ICBC losses are imposed.
\end{itemize}

\rev{Note that we extend the original PINN framework~\citep{raissi2019physics} by augmenting the network inputs with the PDE and ICBC parameters $u$ and $\alpha$, enabling the model to learn mappings to solutions over families of parametrized PDEs. Similar extensions are applied to other PINN variants used as baselines to ensure a fair comparison.}

\textbf{Full Visualization:} Visualization of prediction results for all methods is shown in Fig.~\ref{fig:recovery_visualization_full}.

\begin{figure}
    \centering
    \includegraphics[width=0.99\textwidth]{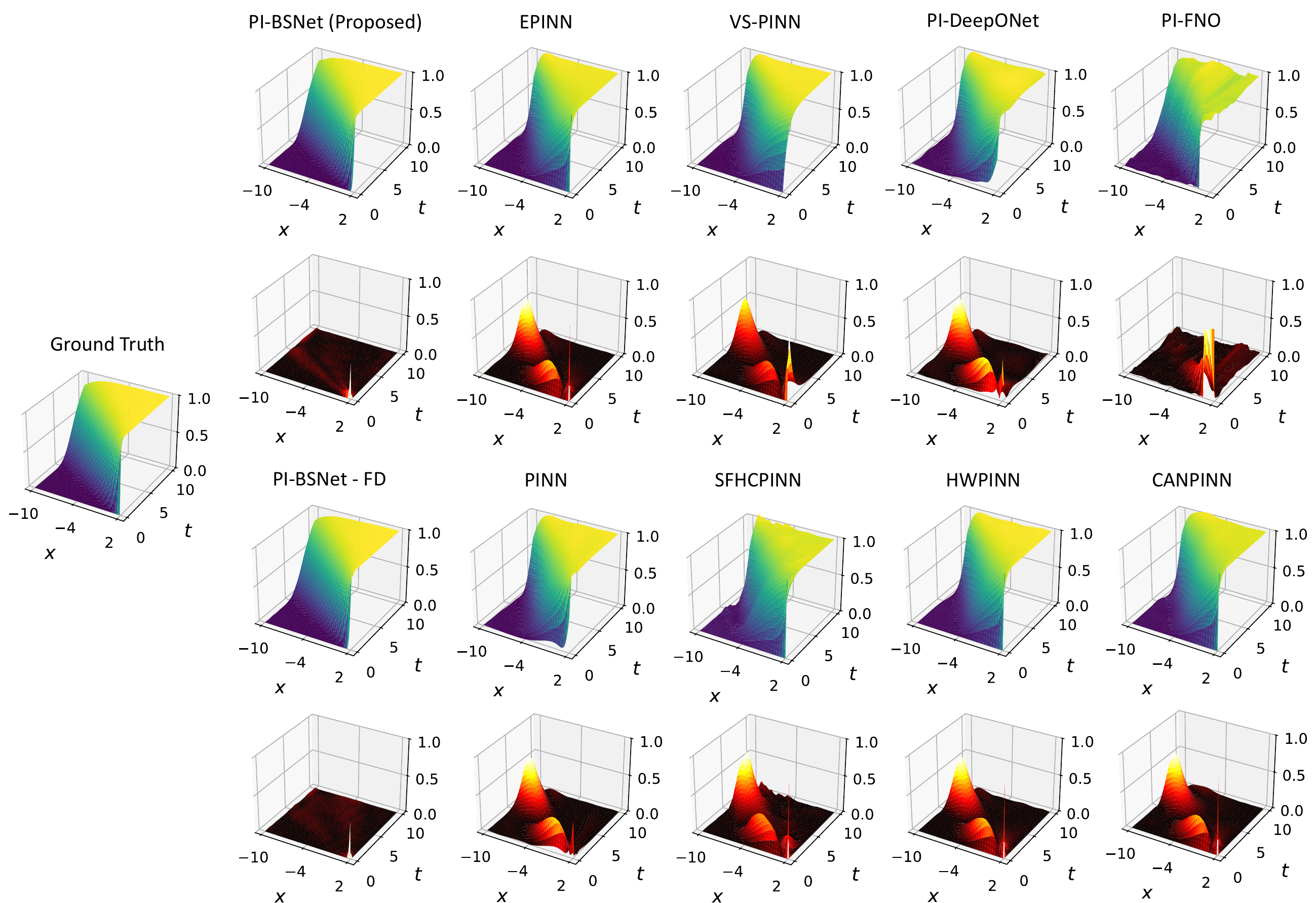}
    \caption{Recovery probability visualizations. Predictions in first row and errors in second row.
    }
    \label{fig:recovery_visualization_full}
\end{figure}


\subsection{Advection Equations}

\textbf{Training Data:}
The advection equation defined in~\eqref{eq:advection} admits an analytical solution
\[
s(x,t) = \sin\left(2\pi\left(\left(x - u t\right) \bmod 1\right) + \alpha\right),
\]
which is evaluated on a uniform spatio-temporal grid with 100 points in $x \in [0,1]$ and 100 points in $t \in [0,2]$. For each of the 100 training samples, we uniformly sample $u \in [0.5, 1.5]$ and $\alpha \in [0, 2\pi)$.

\textbf{Network Configurations:} We train PI-BSNet with 3-layer fully connected neural networks with ReLU activation on varying parameters $u\in [0.5, 1.5]$ and $\alpha \in [0, 2\pi)$, and test on randomly selected parameters in the same domain. The B-spline basis of order $5$ is used and the number of control points along $x$ and $t$ are set to be $\ell_x = \ell_t = 150$. Note that more control points are used in this case study to represent the high frequency solution. 

\rev{
\textbf{Additional Visualizations:} Additional visualizations for testing results with different random parameters are shown in Fig.~\ref{fig:advection_full}.
}

\begin{figure}
    \centering
    \includegraphics[width=0.85\textwidth]{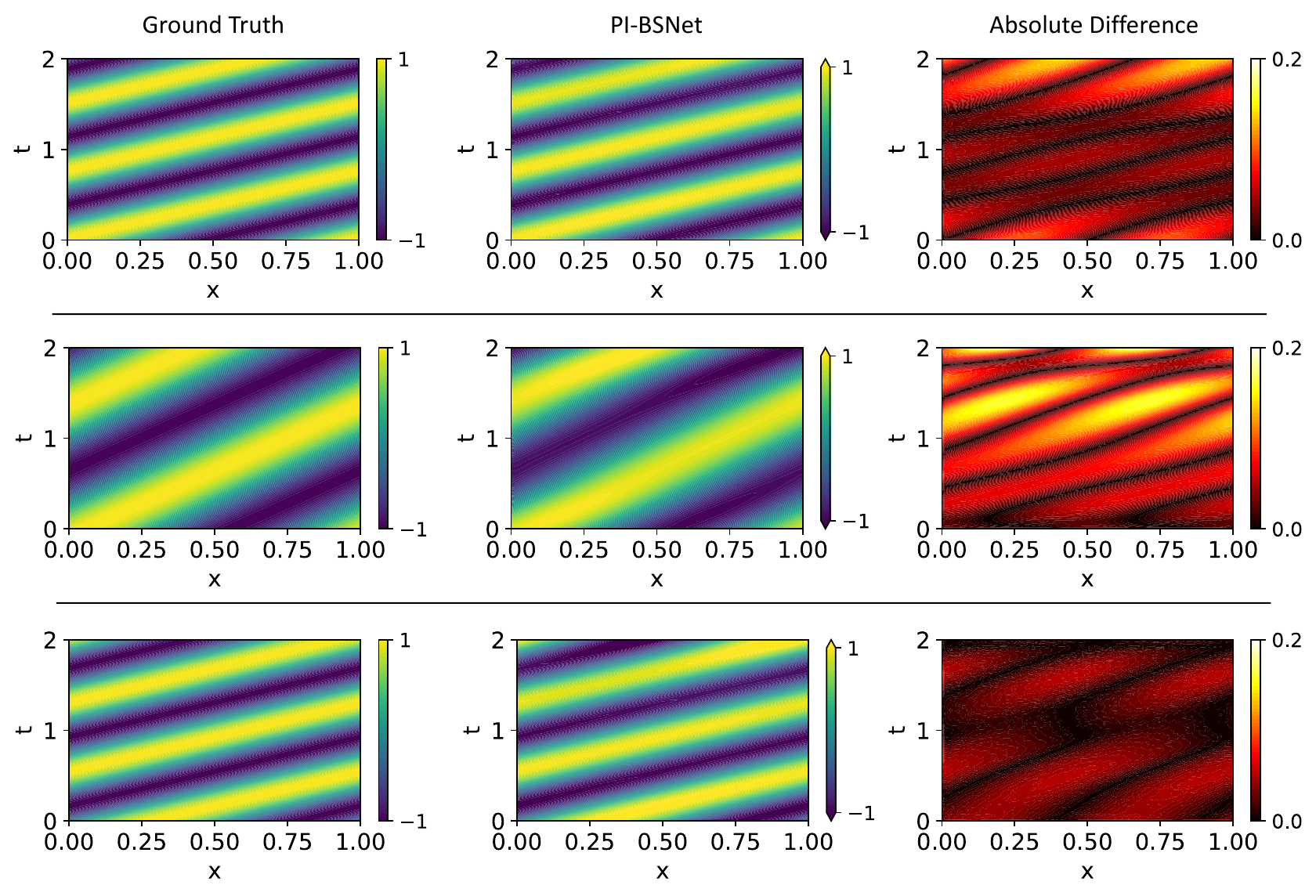}
    \caption{\rev{Results on advection equation with different random parameters.}
    }
    \label{fig:advection_full}
\end{figure}

\subsection{Diffusion on Trapezoid}
\label{sec:trapezoid_appendix}

\textbf{Domain Transformation:} 
The exit probability $s(x,y,t)$ defined in~\eqref{eq:trapezoid_prob} is the solution of the following diffusion equation
\begin{equation}
\label{eq:trapezoid_pde}
\begin{aligned}
        \frac{\partial s}{\partial t} & = \frac{1}{2}(\frac{\partial^2 s}{\partial x^2} + \alpha \frac{\partial^2 s}{\partial y^2}), 
\end{aligned}
\end{equation}
where $\alpha \in [0, 2]$ is an unknown parameter. And the ICBCs are
\begin{equation}
\begin{aligned}
    s(0, x, y) & = 0, \quad \forall (x, y) \in \Omega_{\text{target}}, \\
    s(t, x, y) & = 1, \quad \forall t \in [0, T], \; \forall (x, y) \in \partial \Omega_{\text{target}},
\end{aligned}
\end{equation}

We define the square mapped domain as 
\begin{equation}
    \Omega_{\text{mapped}} = \{ (u,v) \in \mathbb{R}^2: \, u\in[0,1],\, v\in[0,1] \},
\end{equation}
and we can find the mapping from the target domain to this mapped domain as
\begin{equation}
\label{eq:forward_transformation}
(u,v) = T(x,y) = \left(\frac{x+1-0.5y}{2-y},\; y\right),
\end{equation}
which maps the left boundary \(x=-1+0.5y\) of $\Omega_{\text{target}}$ to the left edge \(u=0\) of 
$\Omega_{\text{mapped}}$ and the right boundary \(x=1-0.5y\) to the right edge \(u=1\), while preserving \(y\). 
The inverse mapping is then given by
\begin{equation}
(x, y) = T^{-1}(u,v) = \left(-1 + 0.5v + (2-v)u,\; v\right).
\end{equation}
Note that the mapped domain $\Omega_{\text{mapped}}$ can be readily handled by PI-BSNet.
We then derive the transformed PDE on the mapped square domain $\Omega_{\text{mapped}}$ to be
\begin{equation}
\label{eq:transformed_pde}
\frac{\partial s}{\partial t}=\frac{1}{2} \frac{1}{2-v}\left[\frac{\partial}{\partial u}\left(A(u, v) \frac{\partial s}{\partial u}-B(u, v) \frac{\partial s}{\partial v}\right)+\frac{\partial}{\partial v}\left(-B(u, v) \frac{\partial s}{\partial u}+D(u, v) \frac{\partial s}{\partial v}\right)\right],
\end{equation}
where
\begin{equation}
A(u, v)=\frac{1+\alpha(u-0.5)^2}{2-v}, \quad B(u, v)=\alpha(0.5-u), \quad D(u, v)=\alpha(2-v).
\end{equation}
The corresponding ICBCs are
\begin{equation}
\begin{aligned}
    s(u,v,t) & =1, \quad \forall t \in [0, T], \; \forall (u,v) \in \partial \Omega_{\text{mapped}}, \\
    s(u,v,0) & =0, \quad \forall  (u,v) \in \Omega_{\text{mapped}}.
\end{aligned}
\end{equation}
For efficient evaluation of the physics loss, we approximate~\eqref{eq:transformed_pde} with the following anisotropic but cross‐term-free PDE
\begin{equation}
\label{eq:pde-simplified-square}
  \frac{\partial s}{\partial t}
  \;=\;
  \frac{1}{2}\,\Biggl[\;
    \frac{1}{\bigl(2-v\bigr)^{2}}\;\frac{\partial^2 s}{\partial u^2}
    + \alpha
    \frac{\partial^2 s}{\partial v^2}
  \Biggr].
\end{equation}

\textbf{Training Data:}
The ground truth data is generated by solving~\eqref{eq:trapezoid_pde} on the trapezoid domain using the forward Euler method, with time step size $\Delta t = 0.001$ and spatial resolution $(N_x, N_y) = (21, 21)$.

\textbf{Network Configurations:}
We generate 50 sample solutions of~\eqref{eq:trapezoid_pde} with varying $\alpha$ uniformly sampled from $[0,1.5]$, and transform the solution from $\Omega_{\text{target}}$ to $\Omega_{\text{mapped}}$ via~\eqref{eq:forward_transformation} as training data. We construct a PI-BSNet with 3-layer neural network with ReLU activation functions and 64 hidden neurons each layer.
The number of control points are $\ell_x = \ell_y = 20$ and $\ell_t = 100$. The order of B-spline is set to be $3$.
We train the coefficient network with Adam optimizer with $10^{-3}$ initial learning rate for 3000 epoch. Note that the physics loss enforces~\eqref{eq:pde-simplified-square} on $\Omega_{\text{mapped}}$, which is the domain for PI-BSNet training. We use $w_d = 1$ and $w_p = 0.001$ as the loss weights for training. We use a smaller weight for physics loss since the physics model is approximate. We then test the prediction results on unseen $\alpha$ randomly sampled from $[0, 1.5]$. 

\rev{
\textbf{Additional Visualizations:} Additional visualizations for testing results with different random parameters are shown in Fig.~\ref{fig:trapezoid_full}.
}

\begin{figure}
    \centering
    \includegraphics[width=0.99\textwidth]{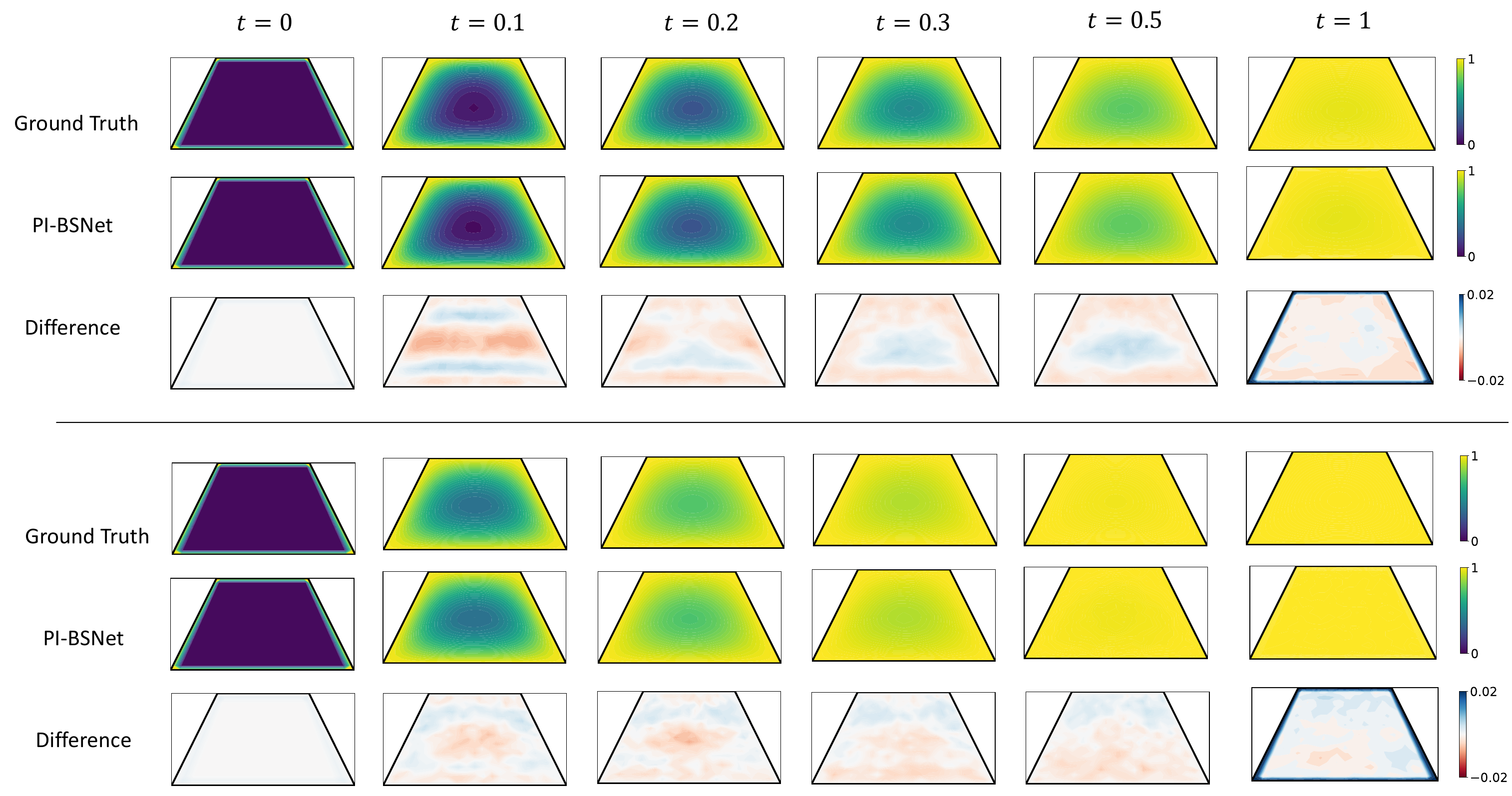}
    \caption{\rev{Results on diffusion equation on the trapezoid with different random parameters.}
    }
    \label{fig:trapezoid_full}
\end{figure}

\section{Ablation Experiments}
\label{sec:ablation_experiment}

\subsection{B-spline Derivatives}
In this section, we show that the analytical formula in~\eqref{eq:b-spline_derivatives} can produce fast and accurate calculation of B-spline derivatives. Fig.~\ref{fig:bs_derivatives} shows the derivatives from B-spline analytical formula and finite difference for the 2D space $[-10, 2] \times [0, 10]$ with the number of control point $\ell_1 = \ell_2 = 15$. The control points are generated randomly on the 2D space, and the derivatives are evaluated at mesh grids with $N_1 = N_2 = 100$. We can see that the derivatives generated from B-spline formulas match well with the ones from finite difference, except for the boundary where finite difference is not accurate due to the lack of neighboring data points. \rev{Note that since there is no fitting involved, the B-spline derivatives are the ground truth values. This illustrates the advantages of using B-spline analytical derivatives over finite difference during physics-informed training.}
\begin{figure}[h]
    \centering
    \includegraphics[width=0.85\textwidth]{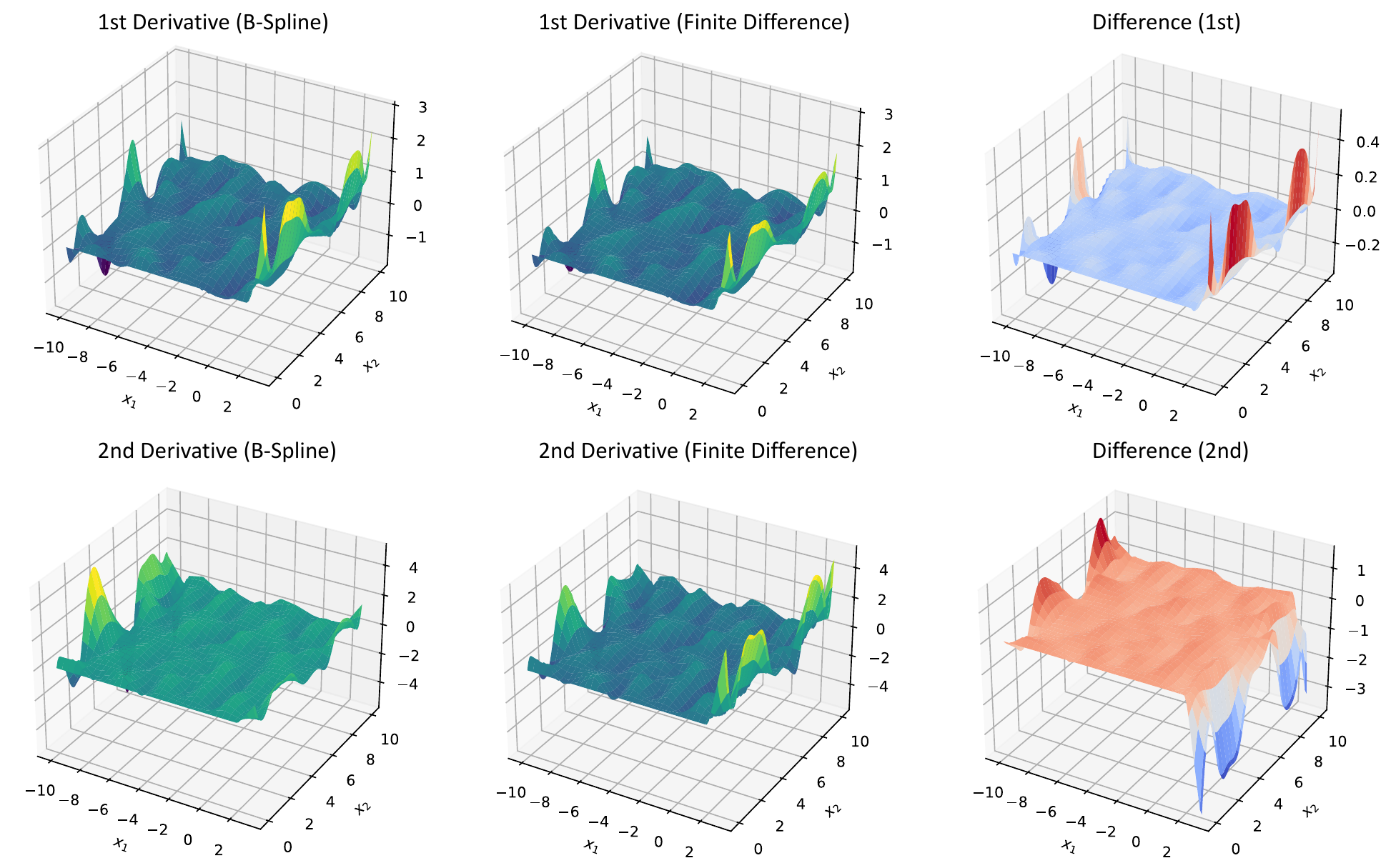}
    \caption{First and second derivatives from B-splines and finite difference.
    }
    \label{fig:bs_derivatives}
\end{figure}

\rev{We further consider a case where the surface is approximated by B-splines. Specifically, we consider the following 2D surface defined on $[-1, 1] \times [0, 1]$,
\begin{equation}
    s(t,x) = \sin(2\pi x)\cos(\pi t)
   + 0.1\,x^2 t
   + 0.05\exp\!\left(
      -\frac{(x-0.3)^2 + (t-0.6)^2}{0.05}
     \right),
\end{equation}
where the ground truth gradients and Hessians can be obtained. We fit a B-spline surface with the number of control point $\ell_1 = \ell_2 = 20$ and order $d = 4$. 
Fig.~\ref{fig:bs_derivatives_fitting} shows the derivatives from B-spline analytical formula and finite difference and Table~\ref{tab:derivative_time_and_error} shows the computation time and errors. The derivatives are evaluated at mesh grids with $N_1 = N_2 = 100$. We can see that the derivatives generated from B-spline formulas are in general more accurate than finite difference, in this case even if the B-spline surface is approximating the ground truth surface, and the finite difference uses the ground truth data value.}

\begin{figure}[h]
    \centering
    \includegraphics[width=0.85\textwidth]{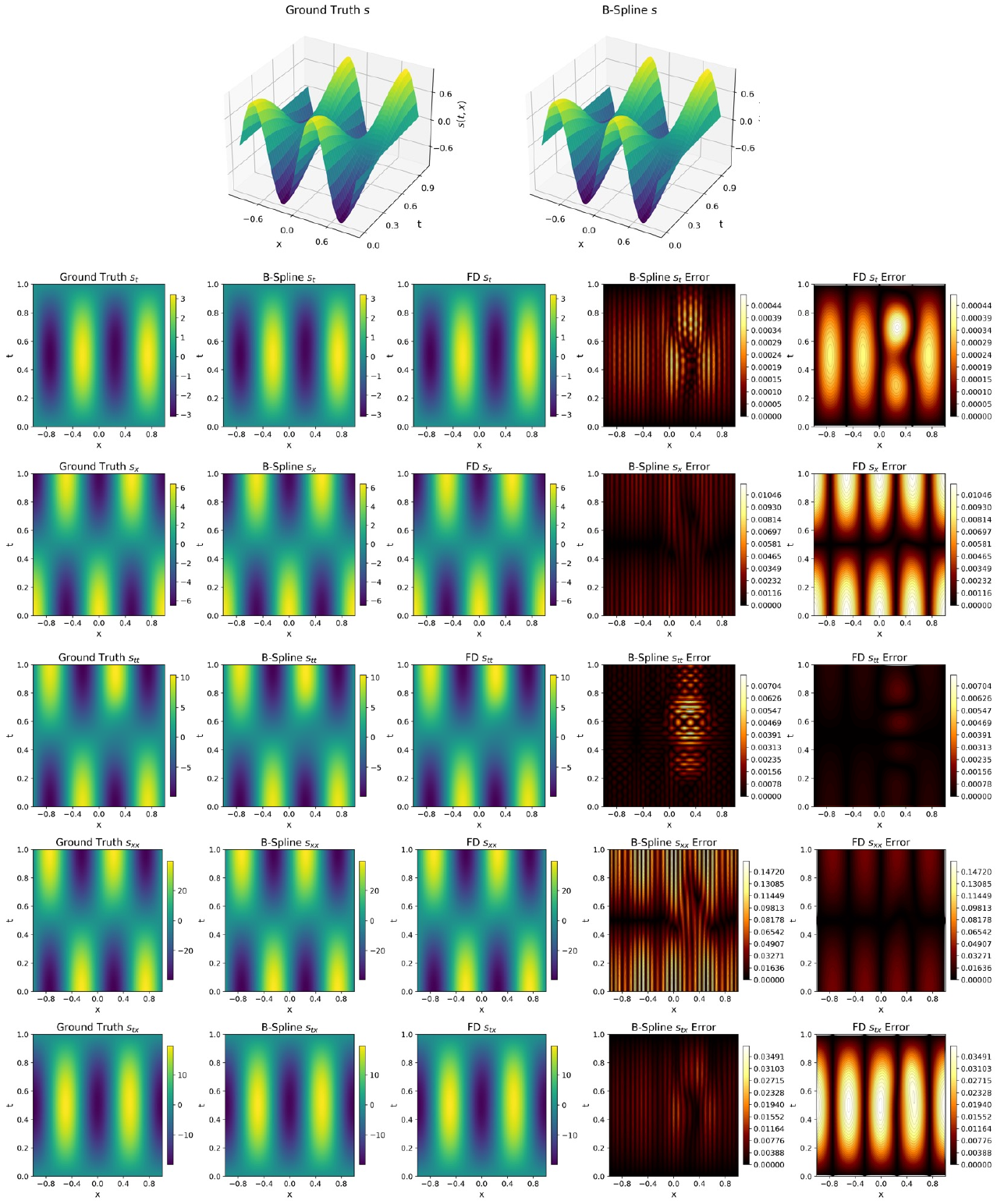}
    \caption{\rev{Derivatives from B-splines after fitting and finite difference.}
    }
    \label{fig:bs_derivatives_fitting}
\end{figure}

\begin{table}[t]
\rev{
\centering
\caption{\rev{Computation time and relative $L^2$ error of the derivatives.}}
\label{tab:derivative_time_and_error}
\begin{tabular}{c c c c c c c}
\toprule
Method & Time (s) & $s_t$ & $s_x$ & $s_{tt}$ & $s_{xx}$ & $s_{tx}$ \\
\midrule
B-Spline
& $\mathbf{4.43\times10^{-4}}$
& $\mathbf{6.78\times10^{-5}}$
& $\mathbf{4.42\times10^{-4}}$
& $2.23\times10^{-4}$
& $3.14\times10^{-3}$
& $\mathbf{4.59\times10^{-4}}$ \\

FD
& $7.22\times10^{-4}$
& $1.17\times10^{-4}$
& $1.83\times10^{-3}$
& $\mathbf{7.53\times10^{-5}}$
& $\mathbf{9.15\times10^{-4}}$
& $1.94\times10^{-3}$ \\
\bottomrule
\end{tabular}
}
\end{table}

\subsection{Optimality of Control Points}
In this section, we show that the learned control points of PI-BSNet are near-optimal in the $L_2$ norm sense. For the recovery probability problem considered in section~\ref{sec:exp_recovery_prob}, we investigate the case for a fixed set of system and ICBC parameters $u = 1.5$ and $\alpha = 2$. We use the number of control points $\ell_1 = \ell_2 = 25$ on the domain $[-10, 2] \times [0, 10]$, and obtain the optimal control points $C^*$ in the $L_2$ norm sense by solving least square problem~\citep{deng2014progressive} with the ground truth data. We then compare the learned control points $C$ with $C^*$ and the results are visualized in Fig.~\ref{fig:control_points}. We can see that the learned control points are very close to the optimal control points, which validates the efficacy of PI-BSNet. The only region where the difference is relatively large is near $c_{25, 0}$, where the solution is not continuous and hard to characterize with this number of control points.
\begin{figure}[h]
    \centering
    \includegraphics[width=0.85\textwidth]{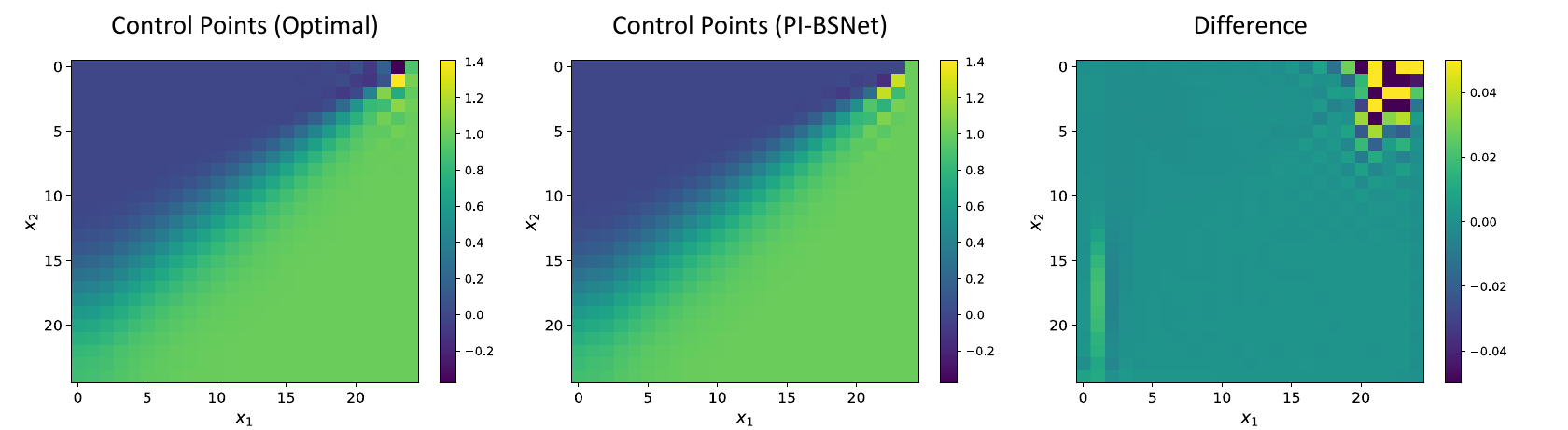}
    \caption{Control points.
    }
    \label{fig:control_points}
\end{figure}







\subsection{ICBC Assignment}
\label{app:icbc_assignment}

\rev{In this section, we investigate the effect of direct ICBC assignment versus ICBC loss enforcement for the proposed PI-BSNet, and verify that explicitly assigning ICBC values through the B-spline formulation indeed benefits PDE solution learning. Specifically, we follow the experimental setting in Section~\ref{sec:exp_recovery_prob} and train two variants: (i) the original PI-BSNet with direct ICBC assignment, and (ii) PI-BSNet without direct ICBC assignment, where ICBCs are imposed via an additional loss term $w_b \mathcal{L}_b$ as in~\eqref{eq:total_loss}. For both methods, the physics loss weight and data loss weight are set to $w_p = 1$ and $w_d = 3$, respectively, while the ICBC-loss variant uses $w_b = 3$. We repeat each setting 10 times and report the relative $L_2$ error in Table~\ref{tab:icbc_assignment_ablation}. The results show that introducing an additional ICBC loss degrades prediction accuracy and leads to larger errors. A representative prediction case is visualized in Fig.~\ref{fig:loss_assignment}. Although enforcing ICBCs via an auxiliary loss with careful tuning can potentially yield reasonable training performance, directly assigning ICBCs is generally preferred, as balancing multiple loss terms is nontrivial and often sensitive to hyperparameter choices~\cite{wang2021understanding, krishnapriyan2021characterizing}.}

\begin{table}[t]
\centering
\rev{
\setlength{\tabcolsep}{10pt}
\renewcommand{\arraystretch}{1.15}
\caption{Effect of ICBC enforcement strategies for PI-BSNet. }
\label{tab:icbc_assignment_ablation}
\begin{tabular}{lc}
\toprule
Method & Relative $L_2$ error (mean $\pm$ std) \\
\midrule
PI-BSNet (direct ICBC assignment) 
& $1.45 \pm\, 0.21 \times 10^{-2}$ \\

PI-BSNet (ICBC loss) 
& $5.52 \pm\, 0.55 \times 10^{-2}$ \\
\bottomrule
\end{tabular}
}
\end{table}



\begin{figure}[h]
    \centering
    \includegraphics[width=0.85\textwidth]{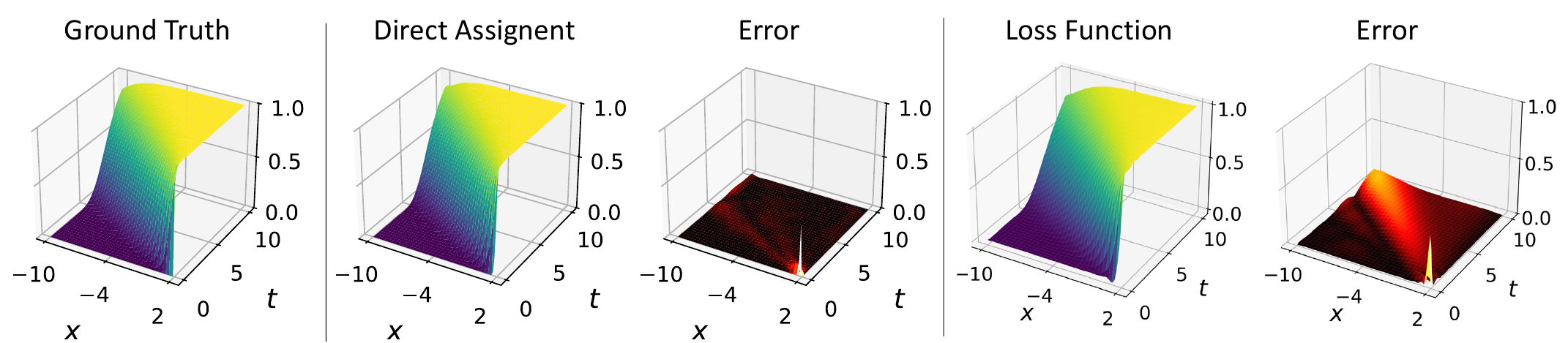}
    \caption{\rev{Results on the convection diffusion equation with variants of PI-BSNet for ICBC enforcement.}
    }
    \label{fig:loss_assignment}
\end{figure}

\subsection{Number of Control Points}
\label{sec:num_control_point_ablation}

In this section, we investigate the effect of the number of control points on the performance of PI-BSNet.
The setting is described in section~\ref{sec:exp_recovery_prob}.
Table~\ref{tab:bs_num_control_pts} shows the approximation error and training time of PI-BSNet with different numbers of control points along each dimension. We can see that the training time increases as the number of control points increases, and the approximation error decreases, which matches with Theorem~\ref{thm:dbsn_universal_approximation} which indicates more control points can result in less approximation error. \rev{The test-time forward pass of the model remains computationally inexpensive and exhibits negligible dependence on the number of control points.}

\begin{table*}[h]
\caption{PI-BSNet prediction MSE with different numbers of control points along each dimension.}
\label{tab:bs_num_control_pts}
\centering
\begin{tabular}{c|cccccc}
\Xhline{3\arrayrulewidth}
Number of Control Points & 2& 5  & 10  & 15 & 20 & 25 \\\hline
Number of NN Parameters & 4417 & 5392  & 9617  & 17092 & 27817 & 41792 \\\hline
Training Time (s) & 241.76 & 223.53  & 247.39  & 295.67 & 310.83 & 370.48 \\\hline
Forward Pass Time ($\times 10^{-4}$ s) 
& 3.85 & 2.40 & 2.06 & 2.37 & 2.28 & 3.56 \\\hline
Prediction MSE ($\times 10^{-4}$) & 5357.9 & 7.327 & 7.313  & 5.817 & 4.490 & 3.064  \\\Xhline{3\arrayrulewidth}
\end{tabular}
\end{table*}

\subsection{Robustness and Loss Function Weights Ablations}
\label{sec:loss_weight_ablation}
In this section, we provide ablation experiments of the proposed PI-BSNet with different loss function configurations, and examine its robustness again noise.
The setting is described in section~\ref{sec:exp_recovery_prob}. We first train with noiseless data and vary the data loss weight $w_d$. Table~\ref{tab:ablation_noiseless} shows the average MSE and its standard deviation over 10 independent runs. We can see that with more weights on the data loss, the prediction MSE reduces as noiseless data help with PI-BSNet to learn the ground truth solution.
We then train with injected additive zero-mean Gaussian noise with standard deviation 0.05 and vary the physics loss weight $w_p$. Table~\ref{tab:ablation_noisy} shows the results. It can be seen that increasing physics loss weights help PI-BSNet to learn the correct neighboring relationships despite noisy training data, which reduces prediction MSE.
In general, the weight choices should depend on the quality of the data, the training configurations (\eg learning rates, optimizer, neural network architecture).

\begin{table}[h]
\captionsetup{skip=6pt}
\caption{PI-BSNet prediction MSE (noiseless data).}
\label{tab:ablation_noiseless}
\centering
\small
\setlength\tabcolsep{4pt}
\begin{tabular}{c|ccccc}
\Xhline{3\arrayrulewidth}
$w_d$ & 1 & 2  & 3  & 4 & 5  \\\hline
$w_p$ & 1 & 1  & 1  & 1 & 1  \\\hline
Prediction MSE ($\times 10^{-5}$) & $36.76 \pm 12.16$ & $12.91 \pm 10.40$ & $10.21 \pm 3.99$  & $9.28 \pm 6.78$ & $3.95 \pm 1.36$  \\\Xhline{3\arrayrulewidth}
\end{tabular}
\end{table}

\begin{table}[h]
\captionsetup{skip=6pt}
\caption{PI-BSNet prediction MSE (additive Gaussian noise data).}
\label{tab:ablation_noisy}
\centering
\small
\setlength\tabcolsep{4pt}
\begin{tabular}{c|ccccc}
\Xhline{3\arrayrulewidth}
$w_d$ & 1 & 1  & 1  & 1 & 1  \\\hline
$w_p$ & 1 & 2  & 3  & 4 & 5  \\\hline
Prediction MSE ($\times 10^{-4}$) & $31.58 \pm 6.46$ & $33.15 \pm 7.77$ & $13.37 \pm 11.74$  & $7.95 \pm 6.24$ & $3.86 \pm 2.05$  \\\Xhline{3\arrayrulewidth}
\end{tabular}
\end{table}

\subsection{Number of NN Layers and Parameters Ablation}
In this section, we show ablation results on the number of neural network (NN) layers and parameters. We follow the experiment settings in section~\ref{sec:exp_recovery_prob}, and train the proposed PI-BSNet with different numbers of hidden layers, each with 10 independent runs. The number of NN parameters, the prediction MSE and its standard deviation are shown in Table~\ref{tab:bs_num_layer}. We can see that with 3 layers the network achieves the lowest prediction errors, while the number of layers does not have huge influence on the overall performance.

\begin{table}[h]
\captionsetup{skip=6pt}
\caption{PI-BSNet prediction MSE with different numbers of NN layers.}
\label{tab:bs_num_layer}
\centering
\begin{tabular}{c|cccc}
\Xhline{3\arrayrulewidth}
Number of Hidden Layers & 2 & 3  & 4  & 5  \\\hline
Number of NN parameters & 37632 & 41792  & 45952  & 50112 \\\hline
Prediction MSE ($\times 10^{-4}$) & $1.12 \pm 0.43$ & $0.90 \pm 0.42$ & $3.17 \pm 2.46$  & $3.12 \pm 2.81$  \\\Xhline{3\arrayrulewidth}
\end{tabular}
\end{table}















\section{Additional Experiments}
\label{sec:additional_experiments}

In this section, we provide additional experiment results.

\subsection{Burgers' Equation}
We conduct additional experiments on the following Burgers' equation, by adapting the benchmark problems in PDEBench~\citep{takamoto2022pdebench} to account for varying system and ICBC parameters.
\begin{equation}
    \frac{\partial s}{\partial t}+u s \frac{\partial s}{\partial x} - \nu \frac{\partial^2 s}{\partial x^2} = 0,
\end{equation}
where $\nu = 0.01$ and $u \in [0.5, 1.5]$ is a changing parameter. The domain of interest is set to be $(x, t) \in [0, 10] \times [0, 8]$, and the initial condition is 
\begin{equation}
    s(x, 0) = \exp\{-(x - \alpha)^2 / 2\},
\end{equation}
where $\alpha \in [2, 4]$ is a changing parameter. We train PI-BSNet with 3-layer fully connected neural networks with ReLU activation on varying parameters $u\in [0.5, 1.5]$ and $\alpha \in [2, 4]$, and test on randomly selected parameters in the same domain. The B-spline basis of order $4$ is used and the number of control points along $x$ and $t$ are set to be $\ell_x = \ell_t = 100$. Note that more control points are used in this case study compared to the convection diffusion equation in section~\ref{sec:exp_recovery_prob}, as the solution of the Burgers' equation has higher frequency along the ridge which requires finer control points to represent. Adaptive basis functions can be potentially used to further reduce errors under the same number of control points. Fig.~\ref{fig:burgers} visualizes the prediction results on several random parameter settings. 
\rev{The average relative $L_2$ error across 20 test cases is $7.294 \pm 1.908 \times 10^{-2}$.}

\begin{figure}[h]
    \centering
    \includegraphics[width=0.9\textwidth]{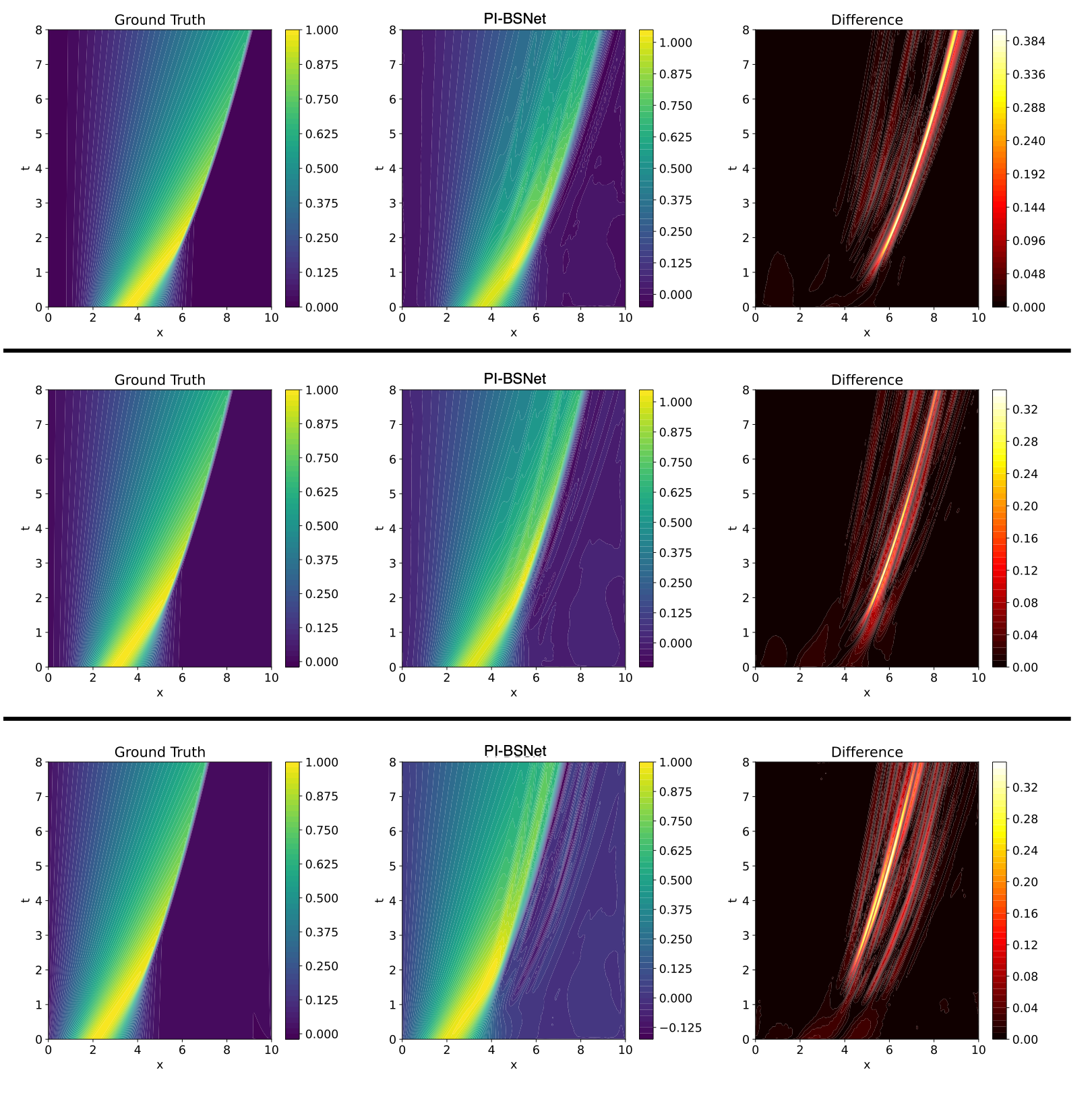}
    \caption{Results on Burgers' equations with different random parameter settings.
    }
    \label{fig:burgers}
\end{figure}

\subsection{3D Heat Equation}
\label{sec:exp_heat_eq}

We consider the 3D heat equation given by
\begin{equation}
\label{eq:fokker_planck}
    \frac{\partial}{\partial t} s(x, t)=D\frac{\partial^2}{\partial x^2} s(x, t),
\end{equation}
where $D= 0.1$ is the constant diffusion coefficient. Here $x = [x_1, x_2, x_3] \in \mathbb{R}^3$ are the states, and the domains of interest are $\Omega_{x_1} = \Omega_{x_2} = \Omega_{x_3} = [0,1]$, and $\Omega_t = [0,1]$. All lengths are in centimeters ($\mathrm{cm}$) and the time is in seconds ($\mathrm{s}$). 
In this experiment we solve~\eqref{eq:fokker_planck} with random linear initial conditions:
\begin{equation}
s(x,t=0) = \alpha_1\cdot x_1 + \alpha_2 \cdot x_2 + \alpha_3\cdot x_3 + \alpha_0
\end{equation}
where $\alpha_1, \alpha_2, \alpha_3 \in [-0.5, 0.5]$ and $\alpha_0 \in [0, 1]$ are randomly chosen. We impose the following Dirichlet and Neumann boundary conditions:
\begin{align}
s(x,t| x_3 =0) & = s(x,t| x_3 =1) = 1, \\
\dfrac{\partial}{\partial x_1} s(x,t| x_1 =0) =
\dfrac{\partial}{\partial x_1} s(x,t| x_1 =1) =0, & \;
\dfrac{\partial}{\partial x_2} s(x,t| x_2 =0) =
\dfrac{\partial}{\partial x_2} s(x,t| x_2 =1) =0
\end{align}

We set the B-splines to have the same number $\ell = 15$ of equispaced control points in each direction including time. We sample the solution of the heat equation at 21 equally spaced locations in each dimension. Thus, each time step consists of $15^3 = 3375$ control points and each sample returns $15^4 = 50625$ control points total. 
The inputs to our neural network are the values of $\alpha$ from which it learns the control points, and subsequently the initial condition surface via direct supervised learning. This is followed by learning the control points associated with later times, ($t>0$) via the PI-BSNet method. Because of the natural time evolution component of this problem, we use a network with residual connections and sequentially learn each time step. The neural network has a size of about $5 \times 10^4$ learnable parameters. We also train a standard PINN~\citep{raissi2019physics} for comparison. 
The PINN consists of 4 hidden layers with 50 neurons in each layer, with Tanh as the activation functions.

Fig.~\ref{fig:3d_heat} shows the learned heat equation and a slice of the residual in the $x_1$-$t$ plane. It can be seen that with PI-BSNet the value is diffusing over time as intended. Although our initial condition does not adhere to the heat equation as estimated by the B-spline derivative, we quickly achieve a low residual, while PINN fails to do so. The average residuals during testing is $0.0121$ for PINN and $0.0032$ for PI-BSNet, which indicates the efficacy of the PI-BSNet method.

\begin{figure}
    \centering
    \includegraphics[width=0.9\textwidth]{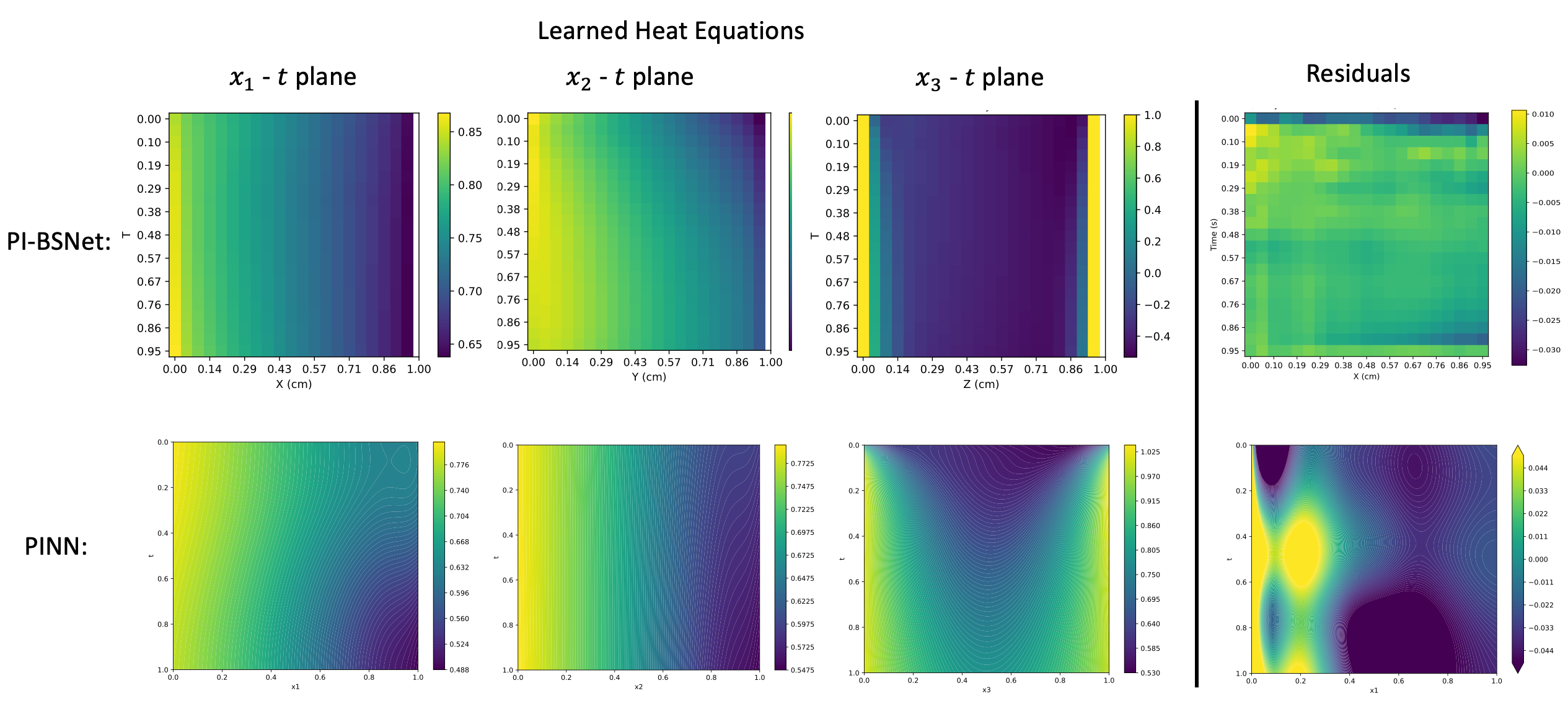}
    \caption{The learned solutions (left) and the residuals (right) for the 3D heat equations.
    }
    \label{fig:3d_heat}
\end{figure}

\end{document}